\newtheorem{assumption}{Assumption}
\theoremstyle{plain}
\newtheorem{theorem}{Theorem}
\newtheorem{lemma}{Lemma}
\theoremstyle{definition}
\theoremstyle{remark}
\newtheorem{remark}[]{Remark}
\newcounter{scenario}
\newcommand{\Rmnum}[1]{\expandafter\@slowromancap\romannumeral #1@}
\begin{document}
\renewcommand{\qedsymbol}{}

\title{Heterogeneous Multi-Agent Reinforcement Learning for Distributed Channel Access in WLANs}

\author{Jiaming Yu,~\IEEEmembership{Graduate Student Member,~IEEE,} Le Liang,~\IEEEmembership{Member,~IEEE,} Chongtao Guo,~\IEEEmembership{Member,~IEEE,} \\Ziyang Guo,~\IEEEmembership{Member,~IEEE,} Shi Jin,~\IEEEmembership{Fellow,~IEEE,} and Geoffrey Ye Li,~\IEEEmembership{Fellow,~IEEE}
\thanks{This article was presented in part at the IEEE Wireless Communication Networking Conference (WCNC), Milan, Italy, March 2025.}
\thanks{Jiaming Yu, Le Liang, and Shi Jin are with the National Mobile Communications Research Laboratory, Southeast University, Nanjing 210096, China
(e-mail: jiaming@seu.edu.cn; lliang@seu.edu.cn; jinshi@seu.edu.cn). Le Liang is also with Purple Mountain Laboratories, Nanjing 211111, China.}
\thanks{Chongtao Guo is with the College of Electronics and Information Engineering, Shenzhen University, Shenzhen 518060, China (e-mail: ctguo@szu.edu.cn).}
\thanks{Ziyang Guo is with the Wireless Technology Lab, 2012 Laboratories, Huawei Technologies Company Ltd., Shenzhen 518129, China (e-mail: guoziyang@huawei.com).}
\thanks{Geoffrey Ye Li is with the ITP Lab, the Department of Electrical and Electronic Engineering, Imperial College London, SW7 2BX London, U.K. (e-mail: geoffrey.li@imperial.ac.uk).}
}

\maketitle

\begin{abstract}
This paper investigates the use of multi-agent reinforcement learning (MARL) to address distributed channel access in wireless local area networks. In particular, we consider the challenging yet more practical case where the agents heterogeneously adopt value-based or policy-based reinforcement learning algorithms to train the model. 
We propose a heterogeneous MARL training framework, named QPMIX, which adopts a centralized training with distributed execution paradigm to enable heterogeneous agents to collaborate. Moreover, we theoretically prove the convergence of the proposed heterogeneous MARL method when using the linear value function approximation. Our method maximizes the network throughput and ensures fairness among stations, therefore, enhancing the overall network performance. Simulation results demonstrate that the proposed QPMIX algorithm improves throughput, mean delay, delay jitter, and collision rates compared with conventional carrier-sense multiple access with collision avoidance (CSMA/CA) mechanism in the saturated traffic scenario. 
Furthermore, the QPMIX algorithm is robust in unsaturated and delay-sensitive traffic scenarios. It coexists well with the conventional CSMA/CA mechanism and promotes cooperation among heterogeneous agents.
\end{abstract}

\begin{IEEEkeywords}
Distributed channel access, heterogeneous multi-agent reinforcement learning, multiple access.
\end{IEEEkeywords}

\section{Introduction}
\IEEEPARstart {W}{i-Fi} has become an important technology in wireless local area networks (WLANs), and has been widely used in personal, home and enterprise environments, with Wi-Fi 6 \cite{khorov2018tutorial} and Wi-Fi 7 \cite{deng2020ieee} as the latest commercially available technology. 
However, achieving seamless connectivity across diverse devices and applications, such as virtual reality/augmented reality, remote surgery, and online gaming, remains a substantial challenge for Wi-Fi 6 and 7.
These applications require higher data rates, lower latency, and higher reliability than the current Wi-Fi technology can provide. As a result, developing the next generation of Wi-Fi technology, i.e., Wi-Fi 8, has attracted increasing attention from both industry and academia \cite{galati2024will}.

To meet the demands of the aforementioned applications, it is crucial not only to enhance the physical layer rate but also to improve the throughput of the media access control (MAC) layer. A fundamental aspect of MAC layer design is distributed channel access (DCA), where multiple users utilize a shared channel in a fully decentralized manner without any centralized scheduling mechanism.
One of the most popular DCA schemes is the carrier-sense multiple access with collision avoidance (CSMA/CA) \cite{colvin1983csma}, which utilizes random access to avoid collisions. However, it exhibits unsatisfactory throughput performance due to its reliance on the binary exponential back-off (BEB) algorithm. In this algorithm, stations (STAs) are required to wait for a random back-off period determined by the contention window (CW) size before initiating a transmission. Furthermore, the CW size doubles following each collision, which can lead to increased latency and reduced efficiency. To address these issues, enhancements to the BEB mechanism have been proposed in prior research works \cite{heusse2005idle,magistretti2011wifi,barcelo2011towards,misra2014semi}. In \cite{barcelo2011towards}, a deterministic BEB mechanism applied twice consecutively after successful transmissions is introduced to avoid collisions in WLANs, particularly in lossy channels and scenarios with high frame error rates. Additionally, a semi-distributed BEB algorithm has been proposed to further reduce frame collisions \cite{misra2014semi}. These approaches aim to improve the efficiency of CSMA/CA by mitigating some of the limitations of the traditional BEB mechanism while maintaining simplicity in implementation. 
Nevertheless, these methods are still inadequate. 
On the one hand, due to the nonlinear impact of parameter dependencies on network performance, it is often necessary to jointly optimize multiple parameters, which is a challenging task \cite{wilhelmi2021spatial}. On the other hand, the increasing diversity of service requirements raised by new applications, such as ultra-low latency and ultra-high reliability, increases the complexity of conventional DCA mechanism and cannot be easily satisfied by the traditional CSMA/CA mechanism.

To overcome the limitations of traditional methods, a multitude of studies have utilized artificial intelligence (AI) techniques to optimize parameters on top of the CSMA/CA framework to improve the throughput performance.
For example, a supervised-learning-based scheme using a random forest approach has been proposed to dynamically adjust CW values, effectively reducing collisions and idle periods while significantly improving throughput, latency, and fairness \cite{abyaneh2019intelligent}. 
Based on machine learning, a simple yet effective method based on the fixed-share algorithm \cite{herbster1998tracking} is developed in \cite{edalat2019dynamically} to optimize CW values, which utilizes a modified fixed-share algorithm to adjust CW values based on current and past network contention conditions, thereby improving channel utilization. 
Furthermore, recent advancements in reinforcement learning (RL) provide a more intelligent and efficient solution for future wireless communication systems \cite{liang2019deep, qin2024ai, liang2019spectrum}.
In particular, an RL-based approach has been proposed in \cite{wydmanski2021contention} to learn the appropriate CW value under varying network conditions. In this approach, the access point (AP) acts as an agent to control the CW values of STAs. Moreover, a learning algorithm that utilizes the post-decision state \cite{amuru2015send} can expedite the learning process by leveraging the past knowledge of system components such as CW and the transmission buffer occupancy. 
Building upon these advancements, a hierarchical RL approach has been proposed \cite{huang2023hierarchical}, where the AP learns the optimal transmit power at the high-level policy and the clear channel assessment (CCA) threshold at the low-level policy. 

In addition to optimizing parameters on top of the CSMA/CA framework, recent studies have explored novel channel access schemes based on single agent reinforcement learning (SARL) and multi-agent reinforcement learning (MARL). For example, a SARL-based multiple access algorithm has been proposed in \cite{yu2020non} to maximize network the throughput while keeping $\alpha$ fairness among different users. Expanding upon SARL, MARL-based DCA algorithms have gained attention. However, current MARL-based DCA methods predominantly rely on value-based algorithms. For instance, 
in \cite{zhang2020enhancing}, each STA operates as a deep Q-network (DQN) agent \cite{mnih2015human} to make transmission or waiting decisions based on historical channel states and feedback from the AP. Additionally, a federated learning framework is utilized to aggregate models, therefore, ensuring fairness among users.
A similar network coexistence problem with imperfect feedback channels has been addressed by an MARL algorithm in \cite{yu2021multi}, which utilizes a feedback recovery mechanism to retrieve the missing feedback information. It employs a two-stage action selection mechanism to facilitate the coherent decision-making, thereby mitigating transmission collisions among the agents. Furthermore, a distributed multichannel access protocol based on dueling Q-network architecture has been proposed in \cite{sohaib2021dynamic} to improve both throughput and fairness in a dynamic network environment.
Moreover, based on the well known value-based QMIX algorithm \cite{rashid2020monotonic}, a MAC protocol, named QMIX-advanced listen-before-talk (QLBT), has been developed in \cite{guo2022multi}.
In the QLBT protocol, each STA is considered as an agent with the ability to make decisions on transmitting or not based on its local observations and historical data. 

While the aforementioned studies focusing on the value-based MARL algorithm, a few policy-based MARL algorithms have also been proposed to solve the DCA problem. In \cite{zhong2019deep}, a deep actor-critic RL framework has been proposed for dynamic multichannel access in both single-user and multi-user scenarios, which addresses the challenges of partial observability and channel uncertainty. Additionally, inspired by mean-field games theory, a novel mean-field-based MARL algorithm for uplink orthogonal frequency-division multiple access in IEEE 802.11ax networks has been proposed to learn the optimal channel access strategy, thereby enhancing network performance in high-density environments \cite{han2024multi}. 


The aforementioned MARL-based DCA algorithms presuppose that all nodes employ the same type of RL algorithm, i.e., either value-based or policy-based algorithms. 
However, given the dynamic nature of Wi-Fi and the variety of RL algorithms, there may exist both value-based and policy-based MARL agents within a basic service set (BSS), which we refer to as the heterogeneous MARL scenario. 
Note that the heterogeneous MARL we consider refers to algorithmic heterogeneity among STAs, rather than reward function or policy parameter heterogeneity as considered in other works \cite{mondal2022approximation}. For example, devices from different manufacturers are likely to employ different types of RL algorithms. 
Therefore, this naturally leads to heterogeneous MARL scenarios, where value-based and policy-based agents might coexist.
Unique challenges exist in such heterogeneous scenarios since many existing techniques to stablize MARL training are tailored to homogeneous MARL scenarios where all RL agents use the same type of RL algorithms, mostly value-based methods.

To solve the aforementioned heterogeneous MARL problem, this article primarily focuses on designing an effective heterogeneous MARL framework to promote seamless cooperation among STAs. To support diverse service requirements in next-generation WLANs, heterogeneous MARL simultaneously maximize the throughput for high bandwidth efficiency and ensure fairness among STAs. 
The contributions and main results of this work are summarized as follows: 
\begin{itemize}
\item We develop a novel heterogeneous MARL framework, named QPMIX for the DCA problem in next-generation WLANs. QPMIX adopts the paradigm of centralized training with distributed execution (CTDE). 
\item We theoretically analyze the convergence of the proposed heterogeneous MARL framework and characterize its convergence property when using the linear value function approximation.
\item We provide extensive simulation results to demonstrate the effectiveness of our framework in a variety of scenarios, including the saturated traffic, unsaturated traffic, and delay-sensitive traffic scenarios. We also conduct an experiment to contrast independent learning against using the QPMIX framework, and verify that QPMIX can promote better collaboration among the heterogeneous agents.
\end{itemize}
The remainder of this paper is structured as follows. The system model and RL formulation are described in Section II.
Section III provides the proposed QPMIX algorithm and protocol. The convergence of the proposed heterogeneous MARL is characterized in Section IV for the special case of linear value function approximation. Finally, simulation results are discussed in Section V, followed by the conclusion and discussion of the future work in Section VI.

\section{System Model And RL Formulation}
As illustrated in Fig. \ref{BSS}, we consider the uplink channel access problem in a time-slotted BSS where $N$ STAs attempt to communicate with an AP, known as the DCA problem. For simplicity, we assume that there are no hidden nodes, implying that each STA performs carrier sensing before accessing the channel and they can listen to one another. 
Each STA is equipped with a finite queue buffer and determines whether to access the channel based on its own observations. The packets arrive at the STA according to a specified distribution while they would be discarded when the buffer is full. After a successful transmission, the AP sends an acknowledgment (ACK) signal to the corresponding STA. The STA decides whether to send packets only when it detects that the channel is idle, which is known as the listen-before-talk mechanism in CSMA/CA. Once the transmission begins, the channel will convert to the busy state, and other STAs will wait until the end of transmission for attempting channel access. But if two or more STAs sense the idle channel in the previous time slot and decide to transmit in the current time slot, then they will not be aware of the collision until they fail to receive the ACK signal from the AP in time. 

The objective of the DCA problem is to maximize the network throughput while ensure fairness among STAs. 
Then how to coordinate access of multiple STAs such that they neither occupy the channel selfishly nor keep waiting due to collisions remains challenging. 
Fortunately, MARL algorithms have proven effective in addressing distributed decision-making in similar scenarios \cite{zhang2020enhancing, sohaib2021dynamic, guo2022multi,yu2021multi}.  
However, these approaches are only applicable to value-based agents. When both value-based and policy-based agents are present in a BSS, existing algorithms are not able to train the heterogeneous agents effectively to achieve cooperation, which is a challenging yet more practical case. Motivated by this, our goal is to design a heterogeneous MARL-based DCA algorithm so that both value-based and policy-based agents can maintain fair access to the channel and maximize the network throughput. To this end, we first introduce preliminaries on RL before presenting the underlying heterogeneous decentralized partially observable Markov decision process (Dec-POMDP) model and our RL elements design for the DCA problem. 
\begin{figure}[!t]
\centering
\includegraphics[width=3.4in]{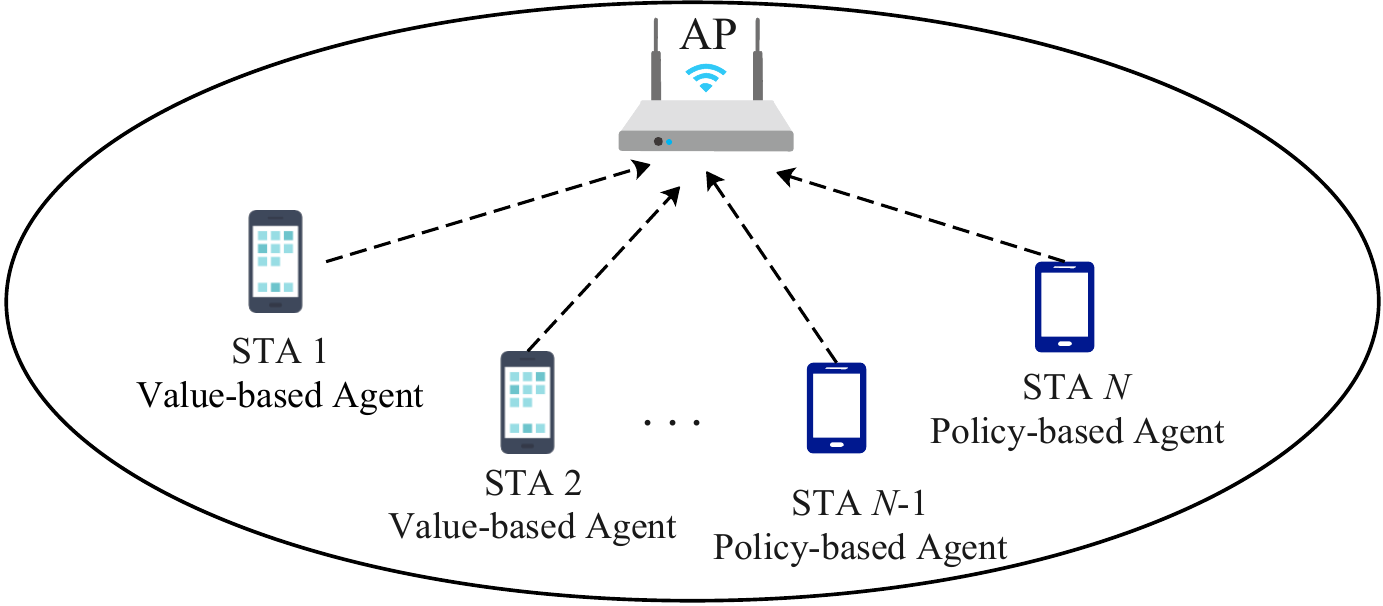}
\caption{Scenario of a BSS: multiple STAs share one common channel and determine data transmission in a distributed manner.}
\label{BSS}
\end{figure}

\subsection{Single Agent Reinforcement Learning}
In SARL problems, an agent continuously interacts with the environment and  updates its policy. At each time step $t$, the agent obtains its observation of the current state of environment $s_t$ and chooses an action $a_t$ according to its policy $\pi$. The environment then responds to the agent with a reward $r_{t}$ and moves to the next state $s_{t+1}$ with probability $P(s_{t+1}|s_t,a_t)$. The objective of the agent is to update its policy based on its historical experience to maximize the expected discounted return $\mathbb{E}[\sum_{t=0}^\infty\gamma^tr_t],$ where $\gamma \in (0, 1]$ is a discount factor. 

To solve this problem, a variety of value-based algorithms and policy-based algorithms have been proposed. Value-based algorithms learn and optimize the action-value function, i.e., the Q-function
$Q(s,a)=\mathbb{E}_\pi[\sum_{k=0}^T\gamma^kr_{t+k}|s_t=s,a_t=a]$. The optimal Q-function is progressively obtained by updating the Q-function using the Bellman equation,
\begin{equation}
Q^*(s,a)=\mathbb{E}[r_t+\gamma\max_{a'}Q^*(s_{t+1},a')|s_t=s,a_t=a].
\end{equation}
Then the agent obtains the optimal policy $\pi^*$ indirectly through the Q-function, i.e., $\pi^*(s)=\arg\max_aQ^*(s,a).$
DQN \cite{mnih2015human} is a classical value-based algorithm that employs a neural network to approximate the Q-function, thereby addressing the curse of dimensionality faced by tabular methods. 

The value-based algorithm derives the optimal policy by estimating the value function, while the policy-based algorithm learns the optimal policy directly. As an agent interacts with its environment, it records a sequence of trajectories. Given the agent policy $\pi$, the probability of a specific trajectory $\tau=\{s_{0},a_{0},r_{0},s_{1},a_{1},r_{1},...,s_{T},a_{T},r_{T}\}$ can be computed as
\begin{equation}
P(\tau)=p(s_0)\prod_{t=0}^T\pi(a_t\mid s_t)P(s_{t+1}\mid s_t,a_t),
\end{equation}
and the expected discounted return is given by
\begin{equation}
\mathbb{E}_\pi[\sum_{t=0}^T\gamma^tr_t]=\sum_\tau R(\tau)P(\tau),
\end{equation}

where $R(\tau)=\sum_{t=0}^T\gamma^tr_t$. The policy is improved by updating the policy parameters using gradient ascent to maximize the expected return. The actor-critic algorithms are an important family of policy-based algorithms, where the actor selects actions and the critic estimates the expected return. Proximal policy optimization (PPO)
\cite{schulman2017proximal} is a well-known actor-critic algorithm that utilizes the importance sampling to improve the data efficiency and an advantage function to estimate advantage value of each action.

\subsection{Multi-Agent Reinforcement Learning}

In MARL problems, each agent independently selects its actions based on its local observations of the environment to maximize the overall expected reward. Different from SARL, the main challenge of MARL is the non-stationarity and partial observability of the environment.
A key factor driving the growth of MARL algorithms is the adoption of the CTDE paradigm \cite{lowe2017multi} to  mitigates 
these challenges.
During the centralized training phase, agents have access to global information, while during execution, they rely only on local observations to make decisions \cite{oliehoek2008optimal, kraemer2016multi}.
Similar to SARL algorithm, MARL algorithms can also be divided into value-based and policy-based algorithms.
Among value-based algorithms, the value decomposition method \cite{wang2020qplex, son2019qtran} is a popular algorithm that adopts CTDE. The basic concept of the value decomposition is to break down a complicated joint value function into individual value functions that each agent can understand and utilize as a foundation for decision-making. This decomposition adheres to the individual-global-max (IGM) principle \cite{rashid2020monotonic}, which ensures consistency between joint and individual optimal actions. QMIX \cite{rashid2020monotonic}, one of the most powerful value-based MARL value decomposition algorithms, employs a hypernetwork to aggregate individual Q-value into a total Q-value. 

Mathematically, the underlying heterogeneous Dec-POMDP model of the DCA problem can be described by a tuple $\langle\mathcal{N},\mathcal{N}_1,\mathcal{N}_2,\mathcal{S},\left\{\mathcal{A}^i\right\}_{i\in\mathcal{N}},\mathcal{O}, \mathcal{P},R,\gamma\rangle$. Here, $\mathcal{N}$ denotes the set of agents with $|\mathcal{N}|=N$. $\mathcal{N}_1$ represents the set of value-based agents and $\mathcal{N}_2$ represents the set of policy-based agents, with $|\mathcal{N}_1| + |\mathcal{N}_2| = N$. $\mathcal{S}$ is the global state space. At each time step $t$, each agent $i \in \mathcal{N}$ selects an action $a^i_t$ from its action space $\mathcal{A}^i$ after receiving its observation $o^i_t \in \mathcal{O}$. The combined actions form a joint action $\boldsymbol{a_t} \in \mathcal{A}$. Additionally, $\mathcal{P}:\mathcal{S}\times\mathcal{A}\times\mathcal{S}\to[0,1]$ represents the state transition probability of the Markov decision process (MDP), and $R:\mathcal{S}\times\mathcal{A}\to\mathbb{R}$ is the reward function. The corresponding RL elements for the DCA problem are introduced hereafter.

\subsection{RL Elements for DCA Problem}
\textbf{Action:}
At each time step $t$, agent $i$ can take action $a^i_t\in\{\text{Transmit}, \text{Wait}\}$, where `Transmit' means the agent transmits at the current time slot while `Wait' means the agent waits for a time slot. Specifically, an agent only updates its action when the channel is sensed idle and its packet buffer is not empty; otherwise it directly takes the default waiting action. These actions are recorded as part of their trajectories. If an agent chooses to transmit, the transmission spans multiple time slots, corresponding to the packet length.

\textbf{Observation Space and Global State:}
Inspired by \cite{guo2022multi}, we set the observation of agent $i$ at time step $t$ as $o_{t}^{i}\triangleq\left\{z_{t}^{i},a_{t-1}^{i},l_{t}^{i},d_{t}^{i},d_{t}^{-i}\right\}$. $z_{t}^{i}$ represents the carrier-sensing result, where $z_{t}^{i}=0$ if the channel is sensed idle; otherwise $z_{t}^{i}=1$.
The variable $a^i_{t-1}$ denotes the action of agent $i$ at time step $t-1$, and the variable $l^i_t$ represents the number of time slots that the same $(z^i_t, a^i_{t-1})$ pair lasts to avoid storing a large amount of duplicated $(z^i_t, a^i_{t-1})$. Each agent maintains two counters $v^i_t$ and $v^{-i}_t$, which represent the number of time slots since the last successful transmission of agent $i$ and any other agent, respectively. When agent $i$ receives its own ACK frame at slot $t$ replied by the AP, $v^i_t$ is set to 0. If agent $i$ receives the ACK frame at slot $t$ sent by the AP to another agent, $v^{-i}_t$ is set to 0. Otherwise, we have $v^i_t = v^i_{t-1} + 1$ and $v^{-i}_t = v^{-i}_{t-1} + 1$. 
Note that ACKs are short frames and only the MAC address needs to be decoded to determine whether the ACK is sent to agent $i$. Therefore, the resulting computational and energy overhead is small.
Furthermore, we define $d_{t}^{i}\triangleq\frac{v_{t}^{i}}{v_{t}^{i}+v_{t}^{-i}}$ and $d_{t}^{-i}\triangleq\frac{v_{t}^{-i}}{v_{t}^{i}+v_{t}^{-i}}$ as the normalized values of $v^i_t$ and $v^{-i}_t$, respectively. To make better decisions, the historical observation information for agent $i$ up to slot $t$ is represented as $\tau^i_t\triangleq\{o_{t-M+1}^{i},...,o_{t-1}^{i},o_{t}^{i}\}$, where $M$ is the length of the observation history. The global state at time $t$ is $s_{t}\triangleq[\boldsymbol{a}_{t-1}, \boldsymbol{D}_{t-1}]$, where $\boldsymbol{a}_{t-1}$ is the joint action of all agents , and $\boldsymbol{D}_t\triangleq[D_{t}^{1},D_{t}^{2},...,D_{t}^{N}]$ with $D_{t}^{i}\triangleq\frac{v_{t}^{i}}{\sum_{i=1}^{N}v_{t}^{i}}$.

\textbf{Reward:}
Recall that our goal is to balance fairness among STAs and maximize the network throughput. Thus, we propose to encourage the agent with the largest delay to transmit. Specifically, the reward at time slot $t$ is represented by 
\begin{equation}
\begin{aligned}
& r_{t} =
\begin{cases}1, &\text{if\ agent\ $i=\mathop{\arg\max}\limits_{i}\boldsymbol{D}_t$\ transmits successfully,}\\[0.1ex] 
0, &\text{if\ no\ transmission,}\\[0.1ex]
-1, &\text{otherwise.}
\end{cases}
\end{aligned}
\end{equation}

\section{QPMIX Algorithm and Protocol}
This section introduces the proposed QPMIX algorithm. We begin by presenting the design of the agent networks, followed by an explanation of the loss function utilized to train these networks. Finally, similar to the valued-based QMIX algorithm \cite{rashid2020monotonic}, we propose a novel MARL algorithmic framework, named QPMIX, to account for its valued-based and policy-based heterogeneity, which operates a CTDE paradigm.

\subsection{QPMIX Neural Network Architecture}
\begin{figure*}[htbp]
\begin{minipage}[t]{0.5\textwidth}
\subfigure[Agent network]{
\includegraphics[width=3.2in ]{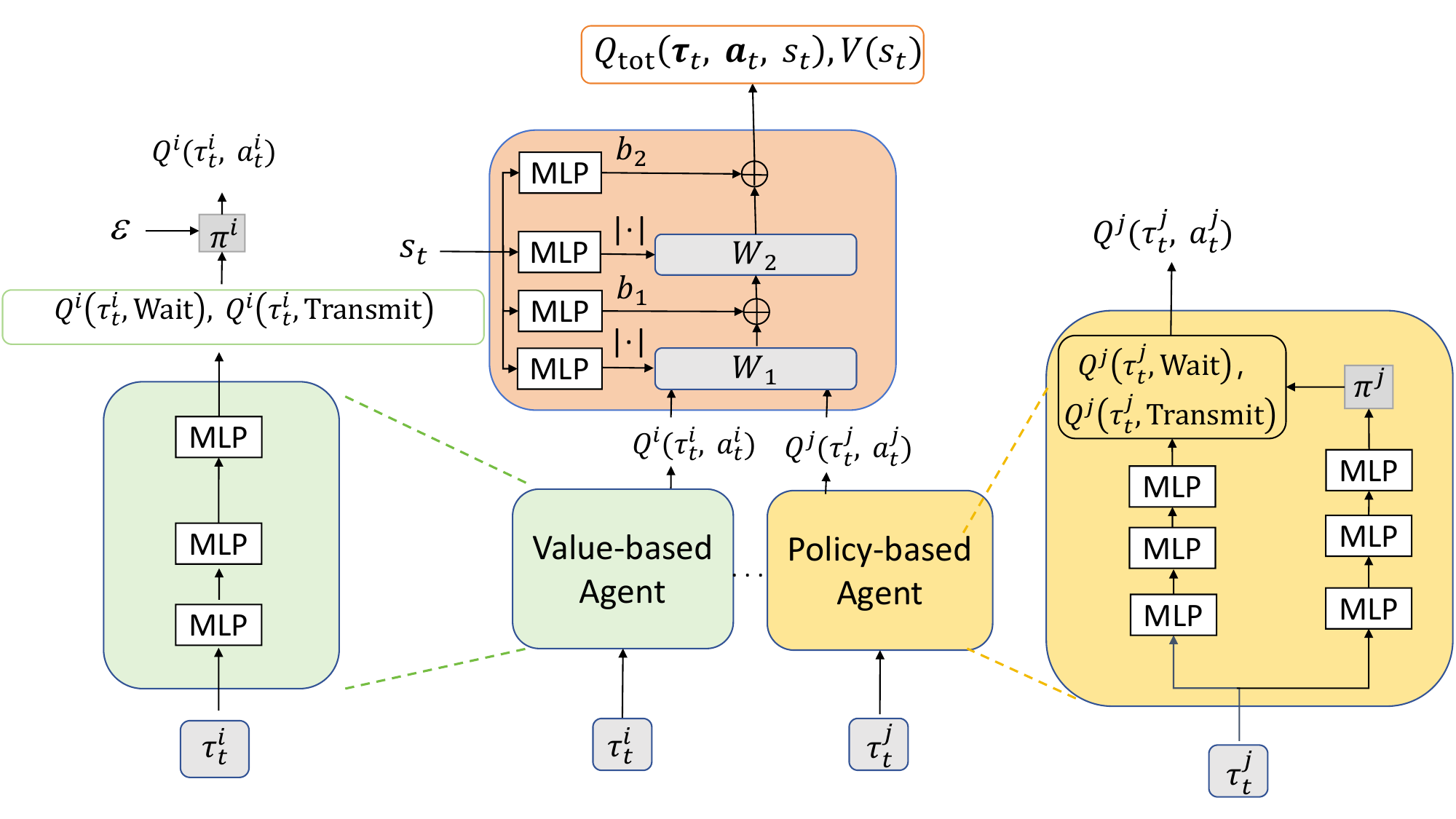}
\label{QPMIXFramework}
}
\end{minipage}
\begin{minipage}[t]{0.5\textwidth}
\subfigure[Training framework]{\includegraphics[width=3.2in]{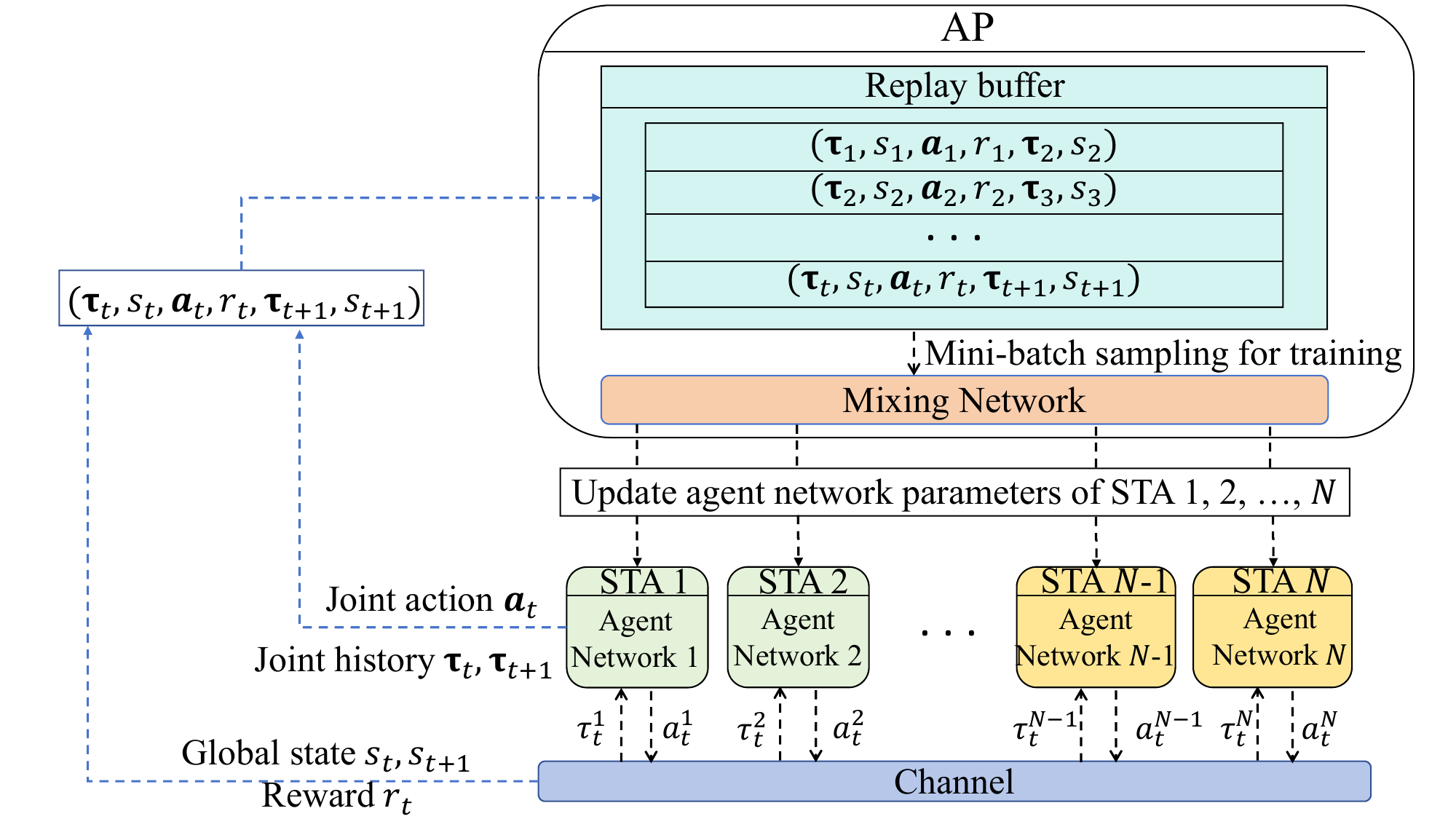}
\label{TrainingFramework}
}
\end{minipage}
\caption{QPMIX framework: (a) Agent network: The left and right block show the architecture of the value-based and policy-based networks, respectively. The middle block corresponds to hypernetworks that produce the weights and biases for the mixing network layers. 
(b) Training framework: The centralized training occurs at the AP based on the experiences reported by each STA. During the training phase, the AP distributes the gradients back to each STAs. During the decentralized execution phase, each STA independently decides whether to access the channel. 
}
\label{QPMIX}
\end{figure*}
The DQN \cite{mnih2015human} and PPO \cite{schulman2017proximal} are widely recognized value-based and policy-based RL algorithms, respectively.
Therefore, the value-based agent utilizes the DQN algorithm while the policy-based agent employs the PPO algorithm, as an example. 
As illustrated in Fig.~\ref{QPMIX}\subref{QPMIXFramework}, the QPMIX architecture consists of three components: the DQN agent network, the PPO agent network, and the mixing network.

\subsubsection{DQN and PPO Network}
The left and right blocks in Fig.~\ref{QPMIX}\subref{QPMIXFramework} represent the neural networks of the DQN and PPO agents, respectively. DQN agent $i$ utilizes its local historical observation $\tau^i_t$ as an input to a network comprising three multilayer perceptron (MLP) layers. Then the network outputs the Q-values for two actions, $Q^i(\tau^i_t, \text{Wait})$ and $Q^i(\tau^i_t, \text{Transmit})$. Based on these Q-values, the DQN agent utilizes an $\epsilon\text{-greedy}$ policy to select an action $a^i_t$, and inputs the corresponding Q-value $Q^i(\tau^i_t, a^i_t)$ into the mixing network. Similarly, PPO agent $j$ takes $\tau^j_t$ into both the critic and actor networks, shown on the left and right sides of the PPO network, respectively. The critic network outputs the Q-values for the two actions, while the actor produces the policy $\pi^j$, representing the probability distribution over the two actions. The agent then samples an action according to its policy $\pi^j$, and then inputs Q-value $Q^j(\tau^j_t, a^j_t)$ into the mixing network.

\subsubsection{Mixing Network}
The mixing network utilizes the global state $s_t$ as input to generate weights $W_1, W_2$ and biases $b_1, b_2$ to perform a weighted sum over the individual Q-values $Q^i(\tau^i_t, a^i_t), \forall i \in\mathcal{N}$, thereby producing the total Q-value $Q_\text{tot}(\boldsymbol{\tau}_t,\boldsymbol{a}_t,s_t)$, where $\boldsymbol{\tau}_t$ is joint historical observations, defined as $\boldsymbol{\tau}_t=[\tau^1_t,...,\tau^N_t]$. 
We modify the original mixing network within the QMIX algorithm by incorporating an additional output state value, represented as $V(s_t)$. 
The absolute activation function is used when generating weights $W_1$ and $W_2$ to satisfy the IGM properties \cite{rashid2020monotonic}, i.e., $\frac{\partial{Q}_\text{tot}}{\partial Q^i}\geq0, \forall i$.  Therefore, the mixing network can be actually regarded as a two-layer MLP.

\subsection{Loss Function}
The parameters of the individual Q-networks of all agents and the mixing network are collectively denoted by $\boldsymbol{\theta}_\text{tot}$. 
We parameterize the actor network of PPO agent $j$, actor networks of all PPO agents, and the global state value $V(s_t)$ by $\boldsymbol{\theta}^j_\text{a}$, $\boldsymbol{\theta}_\text{a}$, and $\boldsymbol{\theta}_\text{V}$, respectively.
The loss function of the QPMIX algorithm can be denoted by
\begin{equation}
\mathcal{L}_\text{QPMIX}=\mathcal{L}_\text{Qtot} + \mathcal{L}_\text{V} + \mathcal{L}_\text{actor},
\label{QPMIXloss}
\end{equation}
where $\mathcal{L}_\text{Qtot}$ is utilized to update the mixing network and the individual Q-networks of all agents, defined as \cite{rashid2020monotonic}
\begin{equation}
\label{QMIX LOSS}
\mathcal{L}_\text{Qtot}(\boldsymbol{\theta}_\text{tot})=\sum_{bs}\left[y_\text{tot}-Q_\text{tot}(\boldsymbol{\tau}_t,\boldsymbol{a}_t,s_t;\boldsymbol{\theta}_\text{tot})\right]^2,
\end{equation}
with $y_\text{tot}=r_t + \gamma\max_{\boldsymbol{a}^{\prime}} Q_\text{tot}(\boldsymbol{\tau}_{t+1},\boldsymbol{a}^{\prime},s_{t+1};\boldsymbol{\theta}^-_\text{tot})$. Here, $\boldsymbol{\theta}^-_\text{tot}$ is the target network parameters, which is fixed for a couple of updates and duplicated periodically from $\boldsymbol{\theta}_\text{tot}$ to stabilize training, and $bs$ is the batch size.
Moreover, 
$\mathcal{L}_\text{actor}$ is utilized to update the actor network, defined as 
\begin{equation}
\begin{aligned}
&\mathcal{L}_\text{actor}(\boldsymbol{\theta}_\text{a})=-\sum_{bs}\sum_{j}\min\left(\frac{\pi^j(a^j_t|\tau^j_t; \boldsymbol{\theta}^j_\text{a})}{\pi^j(a^j_t|\tau^j_t;{\boldsymbol{\theta}^j_\text{a,old}})}A(s_t; \boldsymbol{\theta}_\text{V}),\right.\\ 
&\left.\text{clip}\Big(\frac{\pi^j(a^j_t|\tau^j_t; \boldsymbol{\theta}^j_\text{a})}{\pi^j(a^j_t|\tau^j_t;{\boldsymbol{\theta}^j_\text{a,old}})},1-\delta,1+\delta\Big)A(s_t;\boldsymbol{\theta}_\text{V})\right),
\label{PPOloss}
\end{aligned}
\end{equation}
where $\text{clip}(\cdot, \cdot, \cdot)$ is the clipping function that clips the first input into the range determined by the second and third inputs, i.e., $[1-\delta, 1+\delta]$, with $\delta$ as the clipping factor, usually set to 0.2, and $\boldsymbol{\theta}_\text{a,old}$ is copied from the actor parameter before the update, and set to the new $\boldsymbol{\theta}_\text{a}$ after the update \cite{schulman2017proximal}.
Furthermore, $\mathcal{L}_\text{V}$ is defined as
\begin{equation}
\label{V_LOSS}
\mathcal{L}_\text{V}(\boldsymbol{\theta}_\text{V})=\sum_{bs}\big(r_t+\gamma V(s_{t+1};\boldsymbol{\theta}^-_\text{V})-V(s_t;\boldsymbol{\theta}_\text{V})\big)^2,
\end{equation}
where $\boldsymbol{\theta}^-_\text{V}$ is the parameters of the target global state value function, and $\boldsymbol{\theta}_\text{V}$ is the global state value function that will be used to estimate the advantage function $A(s_t; \boldsymbol{\theta}_\text{V})$ in (\ref{PPOloss}), which is defined as
\begin{equation}
\label{advantage function}
A(s_t; \boldsymbol{\theta}_\text{V}) =\delta_t+\gamma\lambda\delta_{t+1}+\cdots+(\gamma\lambda)^{T-t+1}\delta_{T-1},
\end{equation}
where $\lambda$ is a parameter to trade off between the bias and variance of the advantage function, and $\delta_t$ is the temporal difference (TD) error given by \cite{schulman2017proximal}
\begin{equation}
\delta_t = r_t+\gamma V(s_{t+1};\boldsymbol{\theta}^-_\text{V})-V(s_t;\boldsymbol{\theta}_\text{V}).
\end{equation}


\subsection{Learning Algorithm}
Based on the QPMIX architecture introduced above, we propose a training framework for addressing the DCA problem as illustrated in Fig.~\ref{QPMIX}\subref{TrainingFramework}. 
During the centralized training phase, the AP computes the reward $r_t$ according to the reward function after each agent selects an action based on its own historical observation information $\tau^i_t$. Then, each agent records its own historical observation $\tau^i_t$ and action $a^i_t$, and uploads these recorded values $\tau^i_t, a^i_t$ to the AP every $N_c$ time steps, where $N_c$ represents the networks update interval. 
The AP concatenates these into vectors $\boldsymbol{\tau}_t$ and $\boldsymbol{a}_t$. As the AP can determine the elapsed time since the last successful packet transmission from each STA, it computes $D^i_t, \forall i \in \mathcal{N}$, thereby obtaining global state $s_t$. Consequently, the AP maintains a replay buffer that consists of $(\boldsymbol{\tau}_t, s_t, \boldsymbol{a}_t, r_t, \boldsymbol{\tau}_{t+1}, s_{t+1})$, and samples experiences from the replay buffer to compute gradients of each agent network and mixing network parameters by computing the loss function in (\ref{QPMIXloss}). As a result, the AP updates the mixing network model and offloads the gradients to each STA to update their the models. The AP also maintains a network update counter $C_t$ to track the number of updates. Whenever $C_t\ \text{mod}\ N_t == 0$, the target network parameters $\boldsymbol{\theta}^-_\text{tot}$ and $\boldsymbol{\theta}^-_\text{V}$ are copied as $\boldsymbol{\theta}_\text{tot}$ and $\boldsymbol{\theta}_\text{V}$, respectively, where $N_t$ is the target networks update interval. The data exchange between the AP and STAs described above occurs only during the centralized training phase. During the decentralized execution phase, each STA independently decides when to access the channel based on its own observation history.
The training procedure of the proposed QPMIX algorithm is summarized in Algorithm~\ref{QPMIXalgorithm}.


\begin{algorithm}[!t]
\caption{QPMIX Algorithm} 
\label{QPMIXalgorithm}
\SetKwData{Left}{left}\SetKwData{This}{this}\SetKwData{Up}{up}
 \SetKwFunction{Union}{Union}\SetKwFunction{FindCompress}{FindCompress}
 \SetKwInOut{Input}{Initialization}\SetKwInOut{Output}{output}
\Input{$\epsilon, N_c, N_t, T, t =0, C_t=0$, $s=s_0$, $\boldsymbol{\theta}^-_\text{tot}=\boldsymbol{\theta}_\text{tot}$, $\boldsymbol{\theta}^-_\text{V}=\boldsymbol{\theta}_\text{V}$, $\boldsymbol{\theta}_\text{a,old}^j=\boldsymbol{\theta}_\text{a}^j, \forall j \in \mathcal{N}_2$, $\tau^i_0=\tau_0, \forall i \in \mathcal{N}$. }

\While{$t<T$}{
\For{\textup{Agent} $i=1,2...,N$}{
    \eIf{\textup{Channel is idle}}{
        Agent $i$ obtains $\tau^i_t$ from $\tau^i_{t-1}$, $a^i_{t-1}$.\\
        \eIf{\textup{Agent is value-based}}
        {Value-based agent chooses an action by $\epsilon\text{-}$greedy policy.\\}
        {Policy-based agent chooses an action by policy $\pi^i$.}
    }{
        \eIf{\textup{$a^i_{t-1}$ is the Transmit action}}{
            \eIf{\textup{Agent $i$ does not finish transmission}}{
                Agent $i$ transmits.
            }{
                Agent $i$ waits.}
        }{
            Agent $i$ waits.
        }
    }
}
Update the channel state, reward $r_t$, and the global state $s_{t+1}$ based on joint action $\boldsymbol{a}_t$.\\
Store $(\boldsymbol{\tau}_t, s_t, \boldsymbol{a}_t, r_t, \boldsymbol{\tau}_{t+1}, s_{t+1})$ to replay buffer.\\
\If{$t$ \textup{mod} $N_c$ == 0}{
Sample experiences from replay buffer and compute (\ref{QPMIXloss}).\\
Update $\boldsymbol{\theta}_\text{tot}$, $\boldsymbol{\theta}_\text{V}$, and $\boldsymbol{\theta}^j_\text{a},\ \forall j \in \mathcal{N}_2$ by gradient descent.\\
$C_t \leftarrow C_t + 1, \boldsymbol{\tau}_t \leftarrow \boldsymbol{\tau}_{t+1}, s_t \leftarrow s_{t+1}$, $\boldsymbol{\theta}^j_\text{a,old} \leftarrow \boldsymbol{\theta}^j_\text{a},\ \forall j \in \mathcal{N}_2$.
}
\If{$C_t$ \textup{mod} $N_t$ == 0}{
$\boldsymbol{\theta}^-_\text{tot} \leftarrow \boldsymbol{\theta}_\text{tot}, \boldsymbol{\theta}^-_\text{V} \leftarrow \boldsymbol{\theta}_\text{V}$.
}
}
\end{algorithm}

\section{Heterogeneous MARL Convergence Analysis}
In this section, we provide a theoretical analysis of the proposed QPMIX algorithm. Specifically, we prove the convergence of the QPMIX algorithm when using the linear value function approximation. 
We focus on the cooperative task setting, where agents are trained to maximize the joint expected return. For simplicity, we assume that the global states are fully observable and the rewards are identical for all agents. 
The policy-based agent employs an actor-critic algorithm, so both the value-based agents and the policy-based agents need to estimate the joint-action value function $Q(\cdot,\cdot; \omega)\colon\mathcal{S}\times\mathcal{A}\to\mathbb{R}$. 
The Q-function of value-based agent $l$ is approximated by a linear function $Q^l{\left(s,\boldsymbol{a};\omega^l\right)},\omega^l\in\mathbb{R}^K$, and $l\in\mathcal{N}_1$, where $s \in \mathcal{S},  \boldsymbol{a} \in \mathcal{A}$ are the global state and joint action, respectively. Moreover, for a policy-based agent, both its critic and actor networks are represented by linear functions, denoted respectively as $Q^j(s,\boldsymbol{a};\omega^j)$, with $\omega^j\in\mathbb{R}^K$, and $\pi_p^j(s,a^j;\theta^j)$ with $\theta^j\in\mathbb{R}^M$, where $j\in\mathcal{N}_2$. 
Before proceeding to demonstrate the convergence of heterogeneous MARL, we make the following assumptions. 
\begin{assumption} \label{assump1}
The Q-function is defined as a linear function, $Q(s,\boldsymbol{a};\omega)=\omega^\top\phi(s,\boldsymbol{a})$, where $\phi(s,\boldsymbol{a})\in\mathbb{R}^{K}$ represents the feature vector of the $(s,\boldsymbol{a})$ pair and is uniformly bounded for any $s\in\mathcal{S},\boldsymbol{a}\in\mathcal{A}$. Moreover, feature matrix $\Phi\in\mathbb{R}^{|\mathcal{S}||\mathcal{A}|\times K}$ has full column rank and reward function $R(s,\boldsymbol{a})$ is uniformly bounded for any $s\in\mathcal{S},\boldsymbol{a}\in\mathcal{A}$, where $|\mathcal{S}|,|\mathcal{A}|$ denote the cardinality of the set $\mathcal{S}$ and $\mathcal{A}$, respectively. 
\end{assumption}

\begin{assumption} \label{assump2}
The joint policy of all agents is given by $\pi_{\Theta}\left(s,\boldsymbol{a}\right)=\prod_{l\in\mathcal{N}_1}\pi_{v}^{l}\left(s,a^{l};\omega^{l}\right)\prod_{j\in\mathcal{N}_2}\pi_{p}^{j}\left(s,a^{j};\theta^{j}\right)$, where $\pi_{p}^{j}(s,a^{j};\theta^{j})$ is continuously differentiable with respect to the parameter $\theta^j$, and $\pi^l_v(s, a^l;\omega^{l})$ is the $\epsilon\text{-}$greedy policy of value-based agent $l$ derived by its Q-function.
Suppose $P^{\Theta}$ is the Markov transition probability matrix derived from the policy $\Theta$. Then for any state $s_t,s_{t+1} \in S$ we have:
\begin{equation}P^\Theta(s_{t+1}|s_t)=\sum_{\boldsymbol{a}_t\in\mathcal{A}}\pi_\Theta(s_t,\boldsymbol{a}_t)P(s_{t+1}|s_t,\boldsymbol{a}_t),\end{equation}
where $P(s_{t+1}|s_t,\boldsymbol{a}_t)$ is the state transition probability of the MDP. Furthermore, we assume that Markov chain $\{s_t\}_{t\geq0}$ is irreducible and aperiodic under any joint policy $\pi_{\Theta}$, with its stationary distribution denoted by $d_{\Theta}(s)$ \cite{konda1999actor}. 
\end{assumption}

\begin{assumption} \label{assump3}
The step size of the Q-function update for the value-based agent and the policy-based agent, denoted by $\beta_{\omega,t}$, and the step size of the actor network update for the policy-based agent, denoted by $\beta_{\theta,t}$, satisfy the following conditions
\begin{small}
\begin{equation}
\begin{aligned}
&\sum_t\beta_{\omega,t}=\infty, \sum_t\beta_{\theta,t}=\infty,\sum_t\beta_{\omega,t}^2<\infty,\\[0.1ex]
&\sum_t\beta_{\theta,t}^2<\infty, \lim_{t\to\infty}\frac{\beta_{\theta,t}}{\beta_{\omega,t}}\to0,
\lim_{t\to\infty}\frac{\beta_{\omega,t+1}}{\beta_{\omega,t}}=1.
\end{aligned}
\end{equation}
\end{small}
\end{assumption}
\begin{remark}
Assumption \ref{assump3} is to ensure that the update of the Q-function occurs much faster than that of the actor, thus enabling us to utilize the two-time-scale stochastic approximation theorem \cite{borkar2000ode}.
\end{remark}

To analyze the progressive behavior of agents, the heterogeneous MARL algorithm is divided into two steps: the Q-function update step for both value-based and policy-based agents, and the actor update step for policy-based agents. In the Q-function update step, each agent performs an update based on TD learning to estimate $Q(\cdot,\cdot;\omega^i)$. For simplicity, we use $Q^i_t(\omega^i_t)$ to represent $Q^i(s_t,\boldsymbol{a}_t;\omega_t^i)$. The iteration proceeds as follows:
\begin{equation} 
\begin{cases}
\label{critic step}
\omega_{t+1}^i=\omega_t^i+\beta_{\omega,t}\cdot\hat{\delta}_t^i\cdot\nabla_\omega Q^i_t(\omega_t^i),\\
\hat{\delta}_{t}^{i}=r_{t}+N^{-1}\sum_{k}^{N}c_{t}(i,k)
\left[\gamma Q^{k}_{t+1}(\omega_{t+1}^{k})-Q^{k}_t(\omega_{t}^{k})\right],
\end{cases}
\end{equation}
where $\beta_{\omega,t}>0$ is the step size, and $c_t(i,k)$ is the weight of the TD-error information transmitted from agent $k$ to the agent $i$ at time $t$. The weight matrix $C_t=[c_t(i,k)]\in\mathbb{R}^{N\times N}$ is assumed to be column stochastic, i.e., $\mathbf{1}^\top\mathrm{C}_t=\mathbf{1}$, where $\mathbf{1} \in \mathbb{R}^{N}$ is a vector with all elements equal to one. In fact, when we take the partial derivative of $\mathcal{L}_\text{Qtot}$ with respect to the individual Q-function parameter of agent $i$ in (\ref{QMIX LOSS}), $\frac{\partial{\mathcal{L}}_\text{Qtot}}{\partial Q^i}$ contains a weighted sum over the Q-values of all agents. 
Hence, minimizing $\mathcal{L}_\text{Qtot}$ aligns with the operation in (\ref{critic step}). 

As for the actor update step, each policy-based agent $j$ improves its policy using policy gradient
\begin{equation}
\begin{cases}
\label{equation 2}
\theta_{t+1}^{j}=\theta_{t}^{j}+\beta_{\theta,t}\cdot A_{t}^{j}\cdot\nabla_{\theta^{j}}\log\pi^{j}(\theta_{t}^{j}),\\A_{t}^{j}=Q_t^{j}(\omega_{t}^{j})-\sum_{a^{j}\in\mathcal{A}}\pi^{j}_t(s_t, a_t^{j};\theta_{t}^{j})\cdot Q_{t}^{j}(s_{t},a_{t}^{j},a_{t}^{-j};\omega_{t}^{j}),\\
\end{cases}
\end{equation}
where $\beta_{\theta,t}>0$ is the step size. Please note that the process of updating the PPO model parameters using the loss function in (\ref{PPOloss}), along with the advantage function defined in (\ref{advantage function}), represents a variation of the policy update method and the advantage function used in the original policy gradient algorithm in (\ref{equation 2}). 


Next we utilize the two-time-scale stochastic approximation technique \cite{borkar2009stochastic} to analyze the convergence of $\omega^i_t$ in (\ref{critic step}) and $\theta^j_t$ in (\ref{equation 2}). Assume that the policy of the value-based agent is periodically derived from its Q-function. Specifically, the Q-function parameters in (\ref{critic step}) are periodically duplicated, and the policy is determined using $\epsilon\text{-greedy}$ based on this copied Q-function. Consequently, the policy update of the value-based agent is slower than the Q-function update.  Furthermore, given that the policy updates of policy-based agents are also considerably slower compared to the updates of the Q-function under Assumption \ref{assump3}, the joint policy $\pi_\Theta$ update can be considered static relative to the Q-function. As a result, the general idea of the proof is to establish the convergence of $\omega^i_t$ at a faster time scale, during which $\theta^j_t$ can be regarded as fixed, and subsequently demonstrate the gradual convergence of $\theta^j_t$, inspired by \cite{zhang2018fully}. This ensures the eventual convergence of both Q-functions and actors.


\begin{theorem}
\label{omegatheorem}
Under Assumptions \ref{assump1}-\ref{assump3}, we have $\lim_{t\to\infty}\omega^i_t=\omega_\Theta, \forall i\in \mathcal{N}$ almost surely (a.s., i.e., convergence with probability $1$), for any given joint policy parameters $\Theta$. $\omega_\Theta$ is the unique solution to
\begin{equation}
\label{Theorem1 solution}
\Phi^\top\mathrm{D}_\Theta\left[R+\left(\gamma P^{\pi}-I\right)\Phi\omega_\Theta\right]=0, 
\end{equation}
where $\mathrm{D}_\Theta=\textup{diag}[d_\Theta(s)\pi_\Theta(s,\boldsymbol{a}), s\in\mathcal{S},  \boldsymbol{a}\in\mathcal{A}]\in\mathbb{R}^{|\mathcal{S}||\mathcal{A}|\times|\mathcal{S}||\mathcal{A}|}$ is a diagonal matrix, and $d_\Theta(s)\pi_\Theta(s,\boldsymbol{a})$ is an element on the diagonal. $I\in\mathbb{R}^{|\mathcal{S}||\mathcal{A}|\times|\mathcal{S}||\mathcal{A}|}$ is the identity matrix, $R$ is short for reward function $[R(s,\boldsymbol{a}) , s\in\mathcal{S}, \boldsymbol{a}\in\mathcal{A}]\in\mathbb{R}^{|\mathcal{S}||\mathcal{A}|}$, and $P^{\pi}$ is short for $P^{\pi}(s_{t+1}, \boldsymbol{a}_{t+1} |s_t,\boldsymbol{a}_t) = P(s_{t+1}|s_t,\boldsymbol{a}_t) \pi_{\Theta}(s_{t+1}, \boldsymbol{a}_{t+1})$.
\end{theorem}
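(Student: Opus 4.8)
\emph{Proof proposal.}
The plan is to treat the recursion (\ref{critic step}) as a single-time-scale stochastic approximation with the joint policy $\pi_\Theta$ frozen. This is justified by Assumption \ref{assump3}: since $\beta_{\theta,t}/\beta_{\omega,t}\to 0$, the actor parameters $\theta^j$ evolve on a strictly slower time scale, so on the fast scale of the $\omega$-update the policy $\pi_\Theta$—and hence $P^\Theta$, $d_\Theta$, and $P^\pi$—may be regarded as static. This is exactly the regime of Borkar's two-time-scale theorem \cite{borkar2000ode}, which I would invoke first to reduce the problem to the convergence of (\ref{critic step}) for an arbitrary but fixed $\Theta$.

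Next I would stack the per-agent parameters into $\omega_t=[\omega_t^1;\dots;\omega_t^N]$ and split the iterate into its consensus component $\bar\omega_t=N^{-1}\sum_i\omega_t^i$ and the disagreement $\omega_t^i-\bar\omega_t$. Under Assumption \ref{assump1} linearity gives $\nabla_\omega Q^i_t=\phi(s_t,\boldsymbol{a}_t)$ and $Q^i_t=\phi(s_t,\boldsymbol{a}_t)^\top\omega_t^i$, so summing (\ref{critic step}) over $i$ and using column-stochasticity $\mathbf{1}^\top C_t=\mathbf{1}$ collapses the mixing weights and yields a recursion for $\bar\omega_t$ that, up to vanishing disagreement terms, is a standard linear TD$(0)$ update. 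I would then show the disagreement contracts to zero by exploiting the column-stochastic structure of $C_t$ together with $\sum_t\beta_{\omega,t}^2<\infty$, so that every $\omega_t^i$ asymptotically tracks the common average.

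With consensus established, the expected fast-scale drift of $\bar\omega_t$ under the stationary distribution $d_\Theta$ is $h(\omega)=\Phi^\top\mathrm{D}_\Theta[R+(\gamma P^\pi-I)\Phi\omega]$, whose associated ODE $\dot\omega=h(\omega)$ has equilibrium set described exactly by (\ref{Theorem1 solution}). The key analytical step is global asymptotic stability of this ODE, i.e. negative definiteness of the system matrix $\Phi^\top\mathrm{D}_\Theta(\gamma P^\pi-I)\Phi$. I would establish this in the classical Tsitsiklis--Van Roy manner: because $d_\Theta$ is the stationary distribution of $P^\pi$, the operator $P^\pi$ is a non-expansion in the $\mathrm{D}_\Theta$-weighted norm, so $\gamma P^\pi-I$ is negative definite in that norm; the full column rank of $\Phi$ (Assumption \ref{assump1}) makes $\Phi^\top\mathrm{D}_\Theta\Phi$ positive definite, which simultaneously secures the Hurwitz property and the uniqueness and well-posedness of $\omega_\Theta=-[\Phi^\top\mathrm{D}_\Theta(\gamma P^\pi-I)\Phi]^{-1}\Phi^\top\mathrm{D}_\Theta R$.

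Finally I would verify the remaining hypotheses of the ODE method: the update map is affine and hence Lipschitz; the noise $\hat\delta_t^i\phi(s_t,\boldsymbol{a}_t)-\mathbb{E}[\,\cdot\,]$ is a martingale-difference sequence with conditionally bounded second moments, using the uniform boundedness of features and reward (Assumption \ref{assump1}) and the mixing from Assumption \ref{assump2}; and the iterates are almost surely bounded, which follows from the negative definiteness of the system matrix via a Borkar--Meyn stability argument. Combining these gives $\bar\omega_t\to\omega_\Theta$ a.s., and with consensus, $\omega_t^i\to\omega_\Theta$ a.s. for every $i$. I expect the main obstacle to be the coupling of the consensus analysis to the TD dynamics—specifically, controlling the disagreement terms, which re-enter $\bar\omega_t$ through the implicit $\omega_{t+1}^k$ appearing in $\hat\delta_t^i$, tightly enough that they do not perturb the limiting ODE while simultaneously guaranteeing almost-sure boundedness of the coupled iterates.
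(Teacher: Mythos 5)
Your proposal is correct and follows essentially the same route as the paper's Appendix A: freeze the policy on the fast time scale via Assumption \ref{assump3}, split the stacked iterate into consensus and disagreement components, drive the disagreement to zero using column-stochasticity of $C_t$ and $\sum_t\beta_{\omega,t}^2<\infty$, and then analyze the averaged iterate as a linear stochastic approximation whose limiting ODE is $\dot{\overline{\omega}}=\Phi^\top \mathrm{D}_\Theta[R+(\gamma P^\pi-I)\Phi\overline{\omega}]$, with almost-sure boundedness obtained by a Borkar--Meyn argument (the paper's Lemma \ref{lemma1}). The one substantive difference is how stability of this ODE is certified: the paper argues via Perron--Frobenius that the eigenvalues of $\gamma P^\pi-I$ have negative real parts and then asserts the same for $\Phi^\top \mathrm{D}_\Theta(\gamma P^\pi-I)\Phi$ from full column rank of $\Phi$, whereas you use the Tsitsiklis--Van Roy argument that $P^\pi$ is a non-expansion in the $\mathrm{D}_\Theta$-weighted norm (since $d_\Theta(s)\pi_\Theta(s,\boldsymbol{a})$ is stationary for the state-action chain), making the symmetric part of $\mathrm{D}_\Theta(\gamma P^\pi-I)$ negative definite. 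Your version is in fact the tighter argument: negative-real-part eigenvalues are not in general preserved under the map $A\mapsto\Phi^\top A\Phi$, while negative definiteness is, so the weighted-norm route closes a step the paper states somewhat loosely; both, of course, yield the same Hurwitz conclusion and the same unique equilibrium $\omega_\Theta$.
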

\begin{proof}
See Appendix~\ref{Appendix A}.
\end{proof}

\begin{remark}
Theorem \ref{omegatheorem} ensures that the parameters of Q-function $\omega^i$ for each agent, including both value-based and policy-based agents, will converge to the same point over time a.s. with the sequence ${\omega^i_{t}}$ generated from (\ref{critic step}), for a given joint policy.
\end{remark}

\begin{theorem}
\label{thetatheorem}
Under Assumptions \ref{assump1}-\ref{assump3}, $\theta^j_t$ converges a.s. to a point in the set of the asymptotically stable equilibria of 
\begin{equation}\dot{\theta}^j=\lim_{0<\eta\to0}\Big\{\Gamma^j\big[\theta^j+\eta\mathbb{E}_{s_t\sim d_\Theta,a_t\sim\pi_\theta}\left(A_{t,\theta}^j\cdot\psi_t^j\right)\big]-\theta^j\Big\}\Big/\eta,
\label{Theorem2}
\end{equation}
where $\Gamma^j(\cdot)$ is a projection function, $\psi^j_t=\nabla_{\theta^{j}}\log\pi^{j}(\theta_{t}^{j})$, and $A_{t,\theta}^j$ is the advantage function when $\omega^j_t$ converges to $\omega_\Theta$, i.e., $A_{t, \theta}^{j}=Q^{j}(\omega_\Theta)-\sum_{a^{j}\in\mathcal{A}}\pi^{j}_t(s_t, a_t^{j};\theta_{t}^{j})\cdot Q_{t}^{j}(s_{t},a_{t}^{j},a_{t}^{-j};\omega_\Theta)$

\end{theorem}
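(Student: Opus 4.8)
The plan is to treat the actor recursion (\ref{equation 2}) as the slow component of a two-time-scale stochastic approximation whose fast companion is the critic recursion (\ref{critic step}), and then invoke the ODE method with projection (Kushner--Clark / Borkar). Since Theorem~\ref{omegatheorem} already establishes convergence of the critic for every \emph{fixed} joint policy, the remaining task is to show that, on the slow scale, $\theta^j_t$ asymptotically follows the projected mean-field ODE (\ref{Theorem2}) and settles in its stable equilibria.

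First I would establish the time-scale separation. Because Assumption~\ref{assump3} enforces $\beta_{\theta,t}/\beta_{\omega,t}\to 0$, the actor parameter is quasi-static from the viewpoint of the fast critic iteration, so by Theorem~\ref{omegatheorem} the critic tracks $\omega_\Theta(\theta_t)$; conversely, from the viewpoint of the slow actor iteration the critic appears fully equilibrated at $\omega_\Theta(\theta_t)$. This justifies replacing the sampled advantage $A^j_t$ in (\ref{equation 2}) by $A^j_{t,\theta}$, up to an asymptotically vanishing perturbation. To make this rigorous I would need Lipschitz continuity of the map $\theta\mapsto\omega_\Theta(\theta)$, which I would extract from the implicit characterization (\ref{Theorem1 solution}) together with the full-column-rank condition on $\Phi$ in Assumption~\ref{assump1}.

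Second, I would decompose the actor update into a mean-field drift plus noise,
\begin{equation}
\theta^j_{t+1}=\Gamma^j\!\left[\theta^j_t+\beta_{\theta,t}\big(h^j(\theta_t)+\xi^j_{t+1}+b^j_t\big)\right],
\end{equation}
where $h^j(\theta)=\mathbb{E}_{s\sim d_\Theta,\,\boldsymbol{a}\sim\pi_\theta}[A^j_{t,\theta}\,\psi^j_t]$ is the expected policy-gradient direction, $\xi^j_{t+1}$ collects the zero-mean sampling fluctuations, and $b^j_t$ absorbs the residual critic error $Q^j_t(\omega^j_t)-Q^j(\omega_\Theta)$. The bias $b^j_t\to 0$ a.s. by Theorem~\ref{omegatheorem} and the Lipschitz dependence above. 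Since samples are drawn along a single Markov trajectory rather than i.i.d., $\xi^j_{t+1}$ is not a pure martingale difference; here I would invoke the irreducibility and aperiodicity in Assumption~\ref{assump2} (hence geometric ergodicity under $\pi_\Theta$, with stationary law $d_\Theta$) and solve the associated Poisson equation to split the Markovian noise into a genuine martingale-difference sequence plus a telescoping remainder that vanishes under $\sum_t\beta_{\theta,t}^2<\infty$. I would then verify the hypotheses of the projected ODE method: the drift $h^j$ is continuous by the smoothness of $\pi^j_p$ in Assumption~\ref{assump2}; the iterates remain in a compact set via the projection $\Gamma^j$, so the noise has bounded conditional variance; and the step sizes obey $\sum_t\beta_{\theta,t}=\infty$, $\sum_t\beta_{\theta,t}^2<\infty$. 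The Kushner--Clark lemma then shows that the interpolated trajectory of $\{\theta^j_t\}$ asymptotically tracks (\ref{Theorem2}), so $\theta^j_t$ converges a.s. to an internally chain-transitive invariant set of that ODE, which in the stable framing of the statement is the set of asymptotically stable equilibria.

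The main obstacle I anticipate is the rigorous handling of the two-time-scale coupling under heterogeneity: the value-based agents' $\epsilon$-greedy policies are themselves functions of the drifting critic parameters, so the ``frozen'' joint policy $\pi_\Theta$ entering $h^j$ must be shown to be well-defined and to vary continuously with $\theta$. The periodic-duplication assumption on the value-based policy (stated before Theorem~\ref{omegatheorem}) keeps these policies piecewise constant at the fast scale, but reconciling this discontinuous $\epsilon$-greedy dependence with the continuity requirements of the ODE method will require care---most likely by arguing on the copied, window-wise fixed parameters and passing to the limit.
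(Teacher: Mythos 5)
Your proposal follows essentially the same route as the paper's proof: both cast the actor as the slow time-scale iterate, decompose its update into the mean drift $\mathbb{E}_{s_t\sim d_{\theta_t},a_t\sim\pi_{\theta_t}}\left(A^j_{t,\theta_t}\psi^j_t\right)$ plus a martingale-difference noise term and a bias term from the critic error, show the bias vanishes by Theorem~\ref{omegatheorem} and the noise sums converge via martingale arguments under Assumption~\ref{assump3}, and conclude with the Kushner--Clark lemma for the projected ODE (\ref{Theorem2}). Your additional refinements (Lipschitz continuity of $\theta\mapsto\omega_\Theta$, the Poisson-equation treatment of Markovian noise, and the continuity concern for the $\epsilon$-greedy policies) address details the paper elides by deferring to \cite{zhang2018fully}, but they do not alter the underlying argument.
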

\begin{proof}
See Appendix~\ref{Appendix C}.
\end{proof}

\begin{remark}
Theorem \ref{thetatheorem} states that as the Q-function of each agent converges, the advantage function computed by the policy-based agent also converges, which allows $\theta_t^j$ for $j\in \mathcal{N}_2$ to gradually approach a stationary point of the ordinary differential equation in (\ref{Theorem2}). Since all agents receive the same reward and each Q-function estimates the value for the global state and joint actions, both value-based and policy-based agents update their policies in a manner that maximizes the Q-function. Combining Theorem \ref{omegatheorem} and Theorem \ref{thetatheorem}, we establish the convergence of the QPMIX algorithm.
\end{remark}

\section{Performance Evaluation}
This section provides simulation results of the proposed QPMIX algorithm in different scenarios, including saturated traffic, unsaturated traffic, and delay-sensitive traffic scenarios. We use the throughput, mean delay, delay jitter, collision rate, and Jain fairness index (JFI) metrics to evaluate the performance of the proposed QPMIX algorithm as characterized as follows. 
\begin{itemize}
\item Throughput: The ratio of successful transmission slots to total time slots.
\item Mean delay: The average delay of all successfully transmitted packets, measured as the number of time slots from packet generation to successful transmission.
\item Delay jitter: The variance in the delay of all successfully transmitted packets.
\item Collision rate: The proportion of packets that collide relative to all packets sent.
\item JFI: This is used to measure the fairness among STAs, defined as 
\begin{equation}
\text{JFI}=\frac{\left(\sum_{i=1}^Ne_i\right)^2}{N\sum_{i=1}^N\left(e_i\right)^2},
\end{equation}
where $N$ is the number of STAs and $e_i$ is the throughput of STA $i$. The range of JFI is $[\frac{1}{N},1]$, where $\frac{1}{N}$ means that one STA exclusively occupies the channel and 1 means all STAs equally share the channel.

\end{itemize}


\subsection{Simulation Setup}
\subsubsection{Simulation Scenario}
We consider a channel access scenario for a Wi-Fi BSS, as shown in Fig.~\ref{BSS}. 
Each STA is equipped with a packet buffer of length 10, operating on a first-in, first-out basis.
Both saturated traffic and unsaturated traffic scenarios are considered, where packets arrive following Poisson distribution. The packet arrival rates are $\lambda_1=2000\ \text{packets}/ \SI{}{\second}$ and $\lambda_2=200\ \text{packets} / \SI{}{\second}$, corresponding to high and low traffic intensities in the saturated and unsaturated traffic scenarios, respectively. 
In addition, we also consider a delay-sensitive traffic scenario, where packets are generated periodically with a generation period of $T_\text{v}=\SI{20}{\milli\second}$, i.e., voice over internet protocol (VoIP) traffic.
\begin{table}[!t]
\caption{Simulation Parameters of EDCA}
\label{EDCA parameters}
\centering
\begin{tabular}{|c|c|c|c|}
     \hline 
     \textbf{Parameters} & \textbf{AC\_VO} & \textbf{AV\_VI} & \textbf{AC\_BE} \\
     \hline 
     Time Slot ($\upmu$s) & \multicolumn{3}{|c|}{9}\\
     \hline
     Short Interframe Space ($\upmu$s) & \multicolumn{3}{|c|}{18} \\
     \hline
     Distributed Interframe Space ($\upmu$s) & \multicolumn{3}{|c|}{36} \\
     \hline
     Packet Length ($\upmu$s) & \multicolumn{3}{|c|}{1080} \\
     \hline
     $\text{CW}_\text{min}$ & 7&15&31 \\ 
     \hline
     $\text{CW}_\text{max}$ & 15& 31& 1023 \\
     \hline
\end{tabular}
\end{table}

\begin{table}[!t]
\caption{Hyper-Parameters of QPMIX}
\label{Hyper-Parameters of QPMIX}
\centering
    \begin{tabular}{|c|c|}
        \hline
        \textbf{Parameters} & \textbf{value} \\
        \hline
        Agent update interval $N_c$ & 10 \\
        \hline
        Target network update interval $N_t$ & 1000 \\
        \hline
        Replay buffer size & 500 \\
        \hline
        DQN batch size & 32 \\
        \hline
        Discount factor $\gamma$ & 0.5 \\
        \hline
        Range of $\epsilon$ & 1 to 0.01 \\
        \hline
        Decay rate of $\epsilon$ & 0.998 \\
        \hline 
        DQN and mixing network learning rate & $5 \times 10^{-4}$\\
        \hline 
        PPO learning rate & $1 \times 10^{-5}$ \\
        \hline
        Neurons of DQN agent Q-network & 250, 120, 120\\
        \hline
        Neurons of PPO agent critic & 250, 120, 120\\
        \hline
        Neurons of PPO agent actor & 250, 120, 120\\
        \hline
        Neurons of mixing network & 16 \\
        \hline
    \end{tabular}
\end{table}

\subsubsection{Baselines}
We compare the proposed QPMIX algorithm with two homogeneous MARL algorithms, i.e., the QLBT algorithm \cite{guo2022multi} and MFMAPPO algorithm \cite{han2024multi}, as well as with the conventional Wi-Fi enhanced distributed channel access (EDCA) methods. 
EDCA includes different access categories (ACs), such as AC voice (AC\_VO), AC video (AC\_VI) and AC best effort (AC\_BE), each with distinct channel access parameters. The detailed parameters are shown in Table~\ref{EDCA parameters}.

\subsubsection{Parameters of QPMIX}
For both DQN and PPO agents, the rectified linear unit (ReLU) is used as the activation function for each layer of neurons, except for the last layer. The RMSProp optimizer is employed to update the network parameters. Note that the learning rate for DQN agents is set to $5 \times 10^{-4}$, significantly higher than that of PPO agents, $1 \times 10^{-5}$. This setting is to ensure maximum compliance with Assumption~\ref{assump3}.
The hyper-parameters of the QPMIX algorithm are shown in Table~\ref{Hyper-Parameters of QPMIX}.

\stepcounter{scenario}
\subsection{Scenario \Roman{scenario}: Saturated Traffic}
Fig.~\ref{Performance comparison} shows the throughput, mean delay, delay jitter and collision rate performance of the proposed QPMIX algorithm as compared against the baselines. From Fig.~\ref{Performance comparison}\subref{throughpt}, as the number of STAs increases, the throughputs drop for all algorithms due to the increased collision rate when more STAs attempt to access the channel, as shown in Fig.~\ref{Performance comparison}\subref{collision_rate}. While the proposed algorithm achieves slightly higher throughput than that of QLBT and MFMAPPO algorithms, our primary purpose is to use them as benchmarks to gauge the performance of the proposed QPMIX algorithm for heterogeneous MARL scenarios, rather than to demonstrate and emphasize the performance gains.
We also observe from Fig.~\ref{Performance comparison}\subref{mean_delay} and \ref{Performance comparison}\subref{delay_jitter} that the proposed QPMIX algorithm achieves lower mean delay and delay jitter, which becomes increasingly apparent as the number of STAs increases.
Based on Fig.~\ref{Performance comparison}\subref{collision_rate}, when the number of the STAs increases, the collision rate of the QPMIX algorithm is significantly lower than that of the EDCA ACs. Specifically, the collision rate of the QPMIX algorithm is reduced by 50\% compared to that of AC\_BE (the best in the EDCA categories) in the nine-STA case. 
Furthermore, Table~\ref{JFI} shows the JFI performance of different algorithms when the number of STA varies from two to nine. Compared to other algorithms, QPMIX achieves the highest fairness, whose JFI is very close to 1, corroborating the effectiveness of our proposed QPMIX algorithm in facilitating fair access of nodes to the channel.

\label{Saturated Traffic}
\begin{figure*}[!ht]
\begin{minipage}[t]{0.5\textwidth}
\subfigure[Throughput]{
\includegraphics[width=3.2 in ]{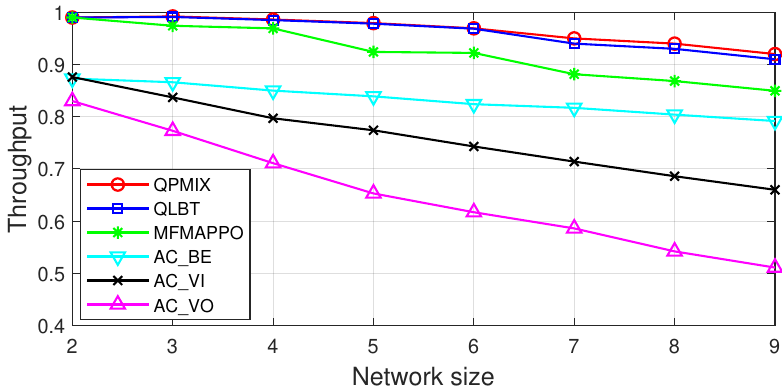}
\label{throughpt}
}
\end{minipage}
\begin{minipage}[t]{0.5\textwidth}
\subfigure[Mean delay]{\includegraphics[width=3.2 in]{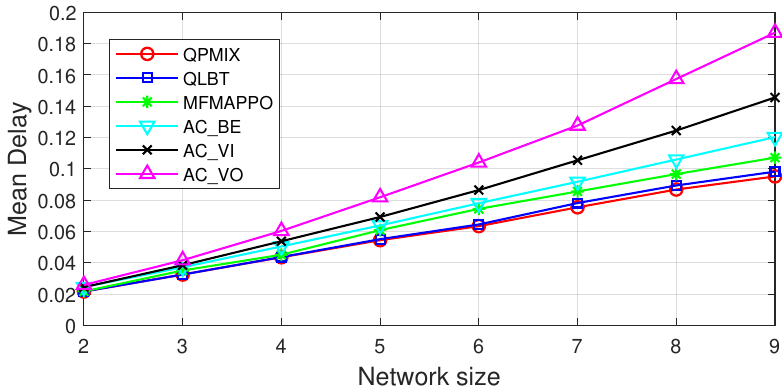}
\label{mean_delay}
}
\end{minipage}
\\
\begin{minipage}[t]{0.5\textwidth}
\subfigure[Delay jitter]{
\includegraphics[width=3.2 in ]{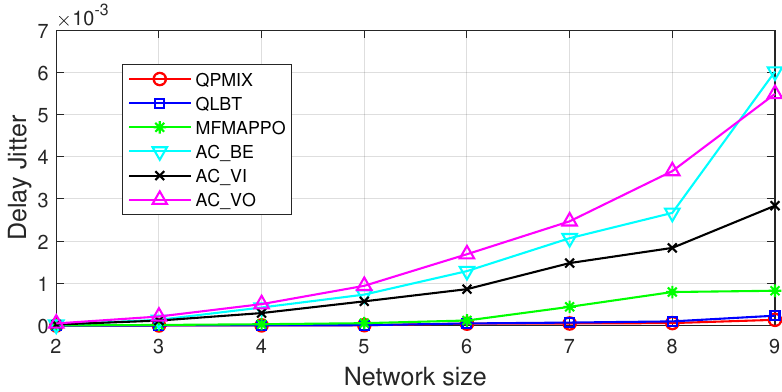}
\label{delay_jitter}
}
\end{minipage}
\begin{minipage}[t]{0.5\textwidth}
\subfigure[Collision rate]{\includegraphics[width=3.2 in]{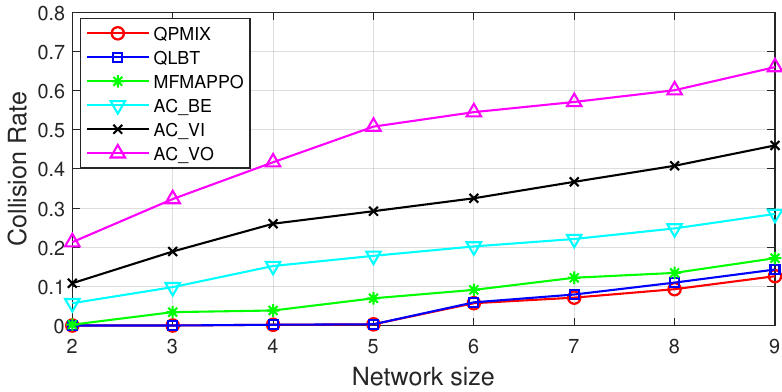}
\label{collision_rate}
}
\end{minipage}
\caption{Performance comparison under saturated Poisson traffic. For QPMIX, the even number in network size axis indicates the same number of the DQN and PPO STAs, while the odd number indicates one more DQN than PPO STAs.}
\label{Performance comparison}
\end{figure*}


\begin{table}[!t]
\scriptsize
\renewcommand{\arraystretch}{1}  
\setlength{\tabcolsep}{4pt}
\caption{JFI performance comparison among each method}
\label{JFI}
\centering
\begin{tabular}{|>{\centering\arraybackslash}c|c|c|c|c|c|c|c|c|}
\hline 
\multirow{2}{*}[-2pt]{\footnotesize\centering \textbf{Method}} & 
\multicolumn{8}{c|}{\footnotesize \textbf{JFI at different STA numbers}} \\
\cline{2-9}
& \footnotesize 2 & \footnotesize 3 & \footnotesize 4 & \footnotesize 5 & \footnotesize 6 & \footnotesize 7 & \footnotesize 8 & \footnotesize 9 \\
\hline
QPMIX & 0.999 & 0.999 & 0.999 & 0.999 & 0.998 & 0.997 & 0.995 & 0.994 \\ 
\hline
QLBT & 0.999 & 0.999 & 0.997 & 0.997 & 0.996 & 0.993 & 0.993 & 0.992 \\ 
\hline
MFMAPPO & 0.999 & 0.999 & 0.997 & 0.997 & 0.994 & 0.995 & 0.992 & 0.990 \\ 
\hline
AC\_BE & 0.999 & 0.992 & 0.992 & 0.977 & 0.972 & 0.974 & 0.970 & 0.956 \\
\hline
AC\_VI & 0.999 & 0.999 & 0.994 & 0.987 & 0.990 & 0.985 & 0.986 & 0.988 \\
\hline
AC\_VO & 0.999 & 0.998 & 0.985 & 0.996 & 0.993 & 0.983 & 0.948 & 0.949 \\
\hline
\end{tabular}
\end{table}



\begin{figure*}[!ht]
\begin{minipage}[t]{0.5\textwidth}
\subfigure[Unsaturated traffic scenario.]{
\includegraphics[width=3.2 in ]{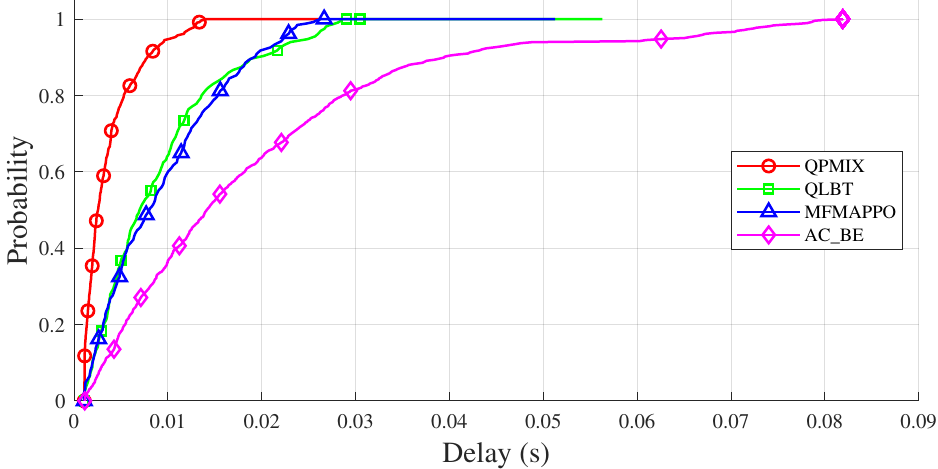}
\label{unsaturated cdf}
}
\end{minipage}
\begin{minipage}[t]{0.5\textwidth}
\subfigure[Delay-sensitive traffic scenario.]{\includegraphics[width=3.2 in]{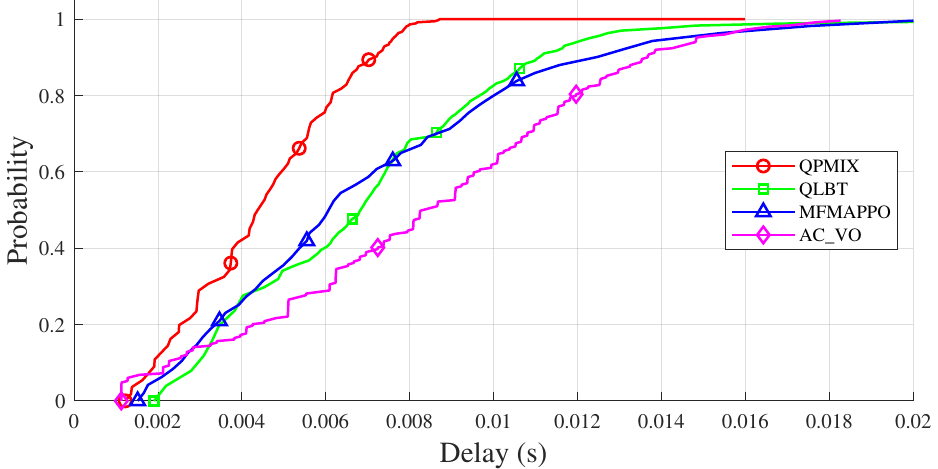}
\label{delay-senstive}
}
\end{minipage}
\caption{Delay CDF comparison among different algorithms under unsaturated and delay-sensitive traffic scenarios.}
\label{unsaturated}
\end{figure*}

\stepcounter{scenario}
\subsection{Scenario \Roman{scenario}: Unsaturated Traffic}
In this subsection, we evaluate the robustness of the proposed QPMIX by training in a saturated traffic scenario and testing in a unsaturated traffic scenario. 
We consider a network comprising four STAs, each with the same packet arrival following a Poisson distribution of $\lambda_2=200 \ \text{packets}/\SI{}{\second}$.
We plot in Fig.~\ref{unsaturated}\subref{unsaturated cdf} the average cumulative distribution function (CDF) of the delay experienced by the STAs when using QPMIX, QLBT, MFMAPPO and AC\_BE, where the average is taken over the four STAs. Note that AC\_BE is used since it achieves the best performance for Poisson traffic among different EDCA ACs as demonstrated in Section~\ref{Saturated Traffic}. 
As illustrated in Fig.~\ref{unsaturated}\subref{unsaturated cdf}, 
the delay upper bounds for QPMIX are only $0.015\ \SI{}{\second}$, which is significantly lower than the delay upper bounds of other baselines.
Therefore, we conclude that the QPMIX algorithm outperforms other baseline methods in terms of the delay performance, even when the traffic density in the testing phase differs from that of the training.

\stepcounter{scenario}
\subsection{Scenario \Rmnum{3}: Delay-Sensitive Traffic}
Fig.~\ref{unsaturated}\subref{delay-senstive} shows the average CDF of delay performance for VoIP traffic using QPMIX, QLBT, MFMAPPO and AC\_VO, with eight STAs in the network. Since VoIP is one of AC\_VO traffic, we use AC\_VO as the EDCA baseline.
The packet generation period is set to $\SI{20}{\milli\second}$ and the models are trained using the saturated Poisson traffic for robustness evaluation. 
It can be observed that the QPMIX algorithm has a lower delay upper bound and steeper CDF curve than other baselines, suggesting that the QPMIX algorithm outperforms other MARL algorithms and traditional random access methods in mean delay and delay jitter. 
The evaluation of both the unsaturated and delay-sensitive traffic scenarios demonstrates that the proposed QPMIX algorithm remains effective across various scenarios, even when the training and testing conditions are different, showing strong robustness.




\stepcounter{scenario}
\subsection{Scenario \Roman{scenario}: Different Mixings of Valued- and Policy-based Agents}
In Fig.~\ref{asymmetric_net} we evaluate the performance for different mixings of DQN and PPO agents to assess the adaptability of the proposed algorithm in heterogeneous MARL, where the numbers of DQN and PPO agents vary. We show the real-time throughput during the training phase for two distinct asymmetric agents configurations under saturated traffic: three DQN agents with one PPO agent, and one DQN agent with three PPO agents. From the figure, despite the disparity in the numbers of DQN and PPO STAs within the network, the throughput of each STA remains nearly identical. As the time approaches approximately $25\ \SI{}{\second}$, the total throughput stabilizes at around 98\%. This observation indicates that the QPMIX algorithm is capable of training agents to cooperate fairly with different combinations of DQN and PPO agents.

\stepcounter{scenario}
\subsection{Scenario \Rmnum{4}: Coexistence with CSMA/CA STAs}
In this subsection, we evaluate the network performance when the learning-based STAs and conventional CSMA/CA STAs coexist. Specifically, we consider a network with four STAs and each STA is under Poisson traffic with packet arrival rate $\lambda=200\ \text{packets}/ \SI{}{\second}$. Table~\ref{coexistence} shows the network performance when the number of learning-based STAs varies from 0 to 3.
The simulation results indicate that the throughput, mean delay, delay jitter and collision rate are all significantly improved with an increasing number of learning-based STAs, which proves that they can coexist well with traditional STAs. Furthermore, the JFI in different scenarios are close to 0.999, proving the fairness among STAs.

{\begin{table}[!t]
\scriptsize
\setlength{\tabcolsep}{3.8pt}
\renewcommand{\arraystretch}{1}
\caption{Performance when QPMIX coexists with CSMA/CA}
\label{coexistence}
\centering
\begin{tabular}{|>{\centering\arraybackslash}c|c|c|c|c|}
\hline 
\multirow{2}{*}[-2pt]{\footnotesize\centering \textbf{Metric}} & 
\multicolumn{4}{c|}{\footnotesize \textbf{Learning-based STAs number}} \\
\cline{2-5}
& \footnotesize 0 & \footnotesize 1 & \footnotesize 2 & \footnotesize 3 \\
\hline
Throughput & 0.677 & 0.723 & 0.743 & 0.758 \\
\hline
Mean delay (s) & 0.0216 & 0.0087 & 0.0068 & 0.0041 \\
\hline
Delay jitter ($\text{s}^2$) & 9.46 $\times 10^{-5}$ & 7.33 $\times 10^{-5}$ & 5.70 $\times 10^{-5}$ & 2.38 $ \times 10^{-5}$ \\
\hline
Collision rate & 0.134 & 0.086 & 0.061 & 0.049 \\
\hline
JFI & 0.9994 & 0.9997 & 0.9993 & 0.9989 \\
\hline
\end{tabular}
\end{table}
}



\begin{figure*}[!ht]
\begin{minipage}[t]{0.5\textwidth}
\subfigure[QPMIX with 3 DQN and 1 PPO.]{
\includegraphics[width=3.2 in ]{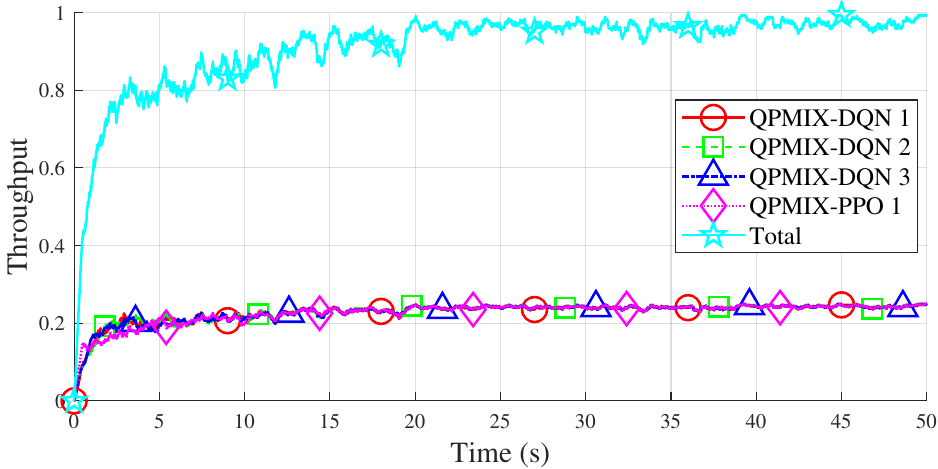}
\label{QPMIX with 3 DQN and 1 PPO}
}
\end{minipage}
\begin{minipage}[t]{0.5\textwidth}
\subfigure[QPMIX with 1 DQN and 3 PPO.]{\includegraphics[width=3.2 in]{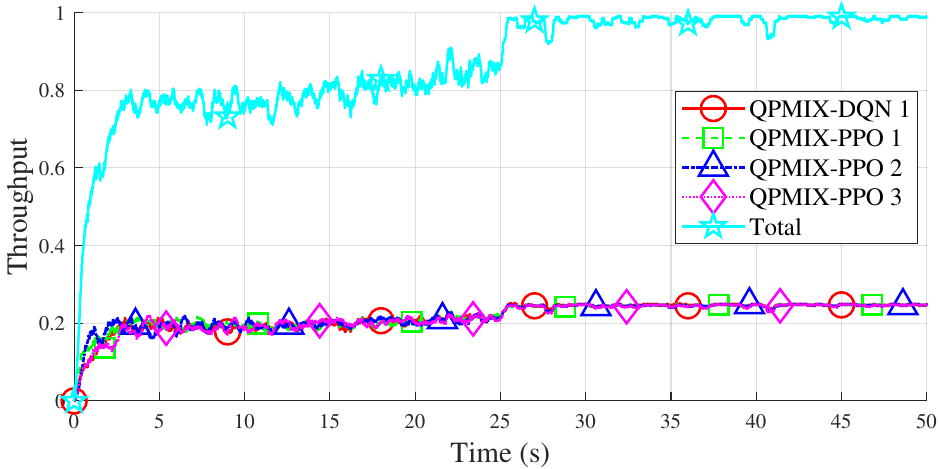}
\label{QPMIX with 1 DQN and 3 PPO}
}
\end{minipage}
\caption{Real-throughput comparison in the different mixings of value and policy-based agents.}
\label{asymmetric_net}
\end{figure*}





\subsection{Independent Learning and Convergence}
\begin{figure*}[htbp]
\begin{minipage}[t]{0.5\textwidth}
\centering
\subfigure[Real-time throughput of QPMIX when $N = 5$.]{\includegraphics[width=3.5in]{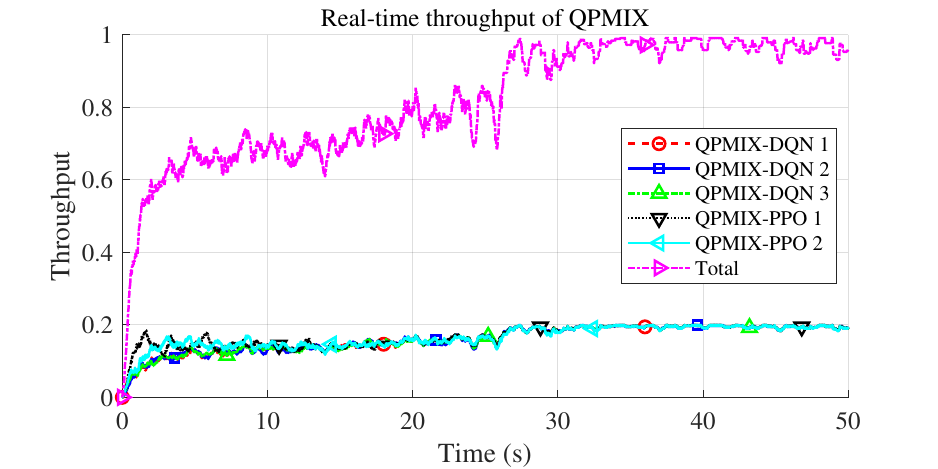}
\label{ablation(a)}
}
\end{minipage}
\begin{minipage}[t]{0.5\textwidth}
\centering
\subfigure[Average reward of independent learning when $N = 5$.]{\includegraphics[width=3.5in]{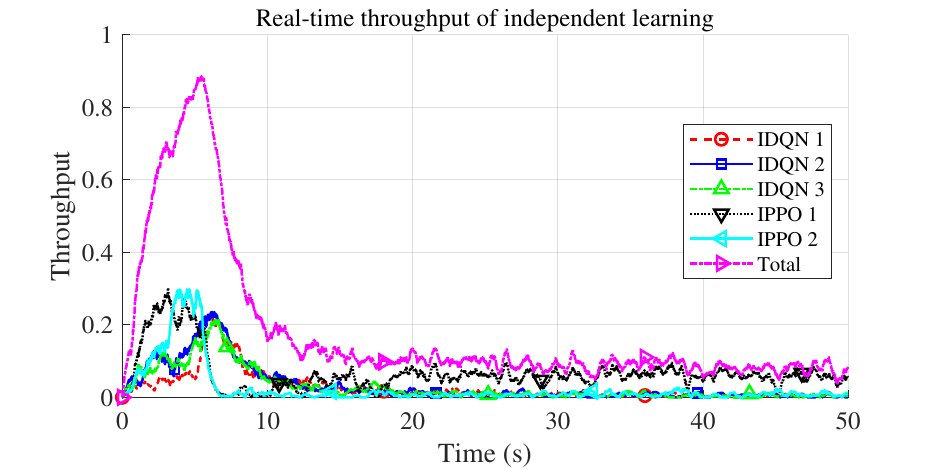}
\label{ablation(b)}
}
\end{minipage}
\caption{Real-time throughput comparison between the QPMIX and independent learning when STAs number is 5.}
\label{ablation_th}
\end{figure*}

\begin{figure}[!t]
\centering
\includegraphics[width=3.5in]{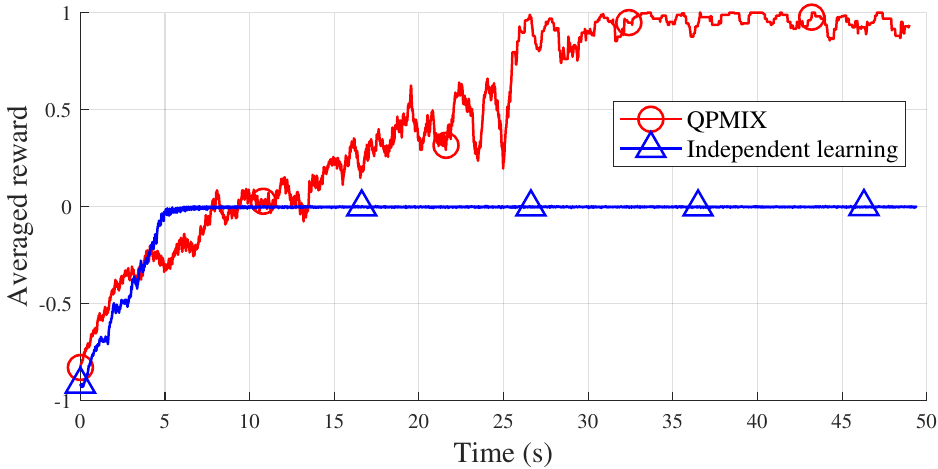}
\caption{Average reward comparison between the QPMIX and independent learning when STAs number is 5.}
\label{ablation_re}
\end{figure}

We conduct an experiment involving independent learning to demonstrate the effectiveness of the proposed QPMIX framework. Independent learning follows a distributed training with distributed execution paradigm, where agents train their own networks based solely on their own experience, with no information interaction among them. This means that there is no mixing network for training in Fig.~\ref{QPMIX}\subref{TrainingFramework}. The reward function and agent networks are the same as the design in QPMIX. Fig.~\ref{ablation_th} and Fig.~\ref{ablation_re} illustrate the real-time throughput and the average reward over the past 500 time steps during training of both the QPMIX algorithm and independent learning under saturated traffic in the Five-STA case. Independent DQN and independent PPO are marked as IDQN and IPPO, respectively. 
From Fig.~\ref{ablation_th}, QPMIX achieves close to 98\% total throughput during training while ensuring fairness. In contrast, the independent learning method shows only a temporary increase in throughput in the early training stages due to agent exploration.
However, the agents eventually converge to a waiting strategy, causing a steady decline in the throughput performance.
We further present in Fig.~\ref{ablation_re} the rewards of the proposed QPMIX algorithm and the independent learning approach as training progresses. 
We average every 500 reward values for the ease of exposition. From Fig.~\ref{ablation_re}, the average rewards of the QPMIX algorithm improve as training progresses, demonstrating the effectiveness of the proposed algorithm. The performance gradually converges to approximately $30\ \SI{}{\second}$ for five STAs. 
However, the average rewards of the independent learning approach converge to zero, indicating that the agents have converged to the Wait action. 
As the policies of other agents dynamically during the learning process, independent learners face a non-stationary environment, making them challenging to cooperate with one another and leading to a suboptimal solution. 
The QPMIX algorithm incorporates a mixing network that implicitly shares global information among agents, facilitating collaborative behaviors and enhancing overall performance. Although the proposed QPMIX algorithm increases the training time compared with the CSMA/CA mechanism, the improvements in throughput, mean delay, delay jitter and collision rate make it a compelling alternative. The additional training time is acceptable, especially considering that the advancements in AI technology allow for the utilization of pre-trained models, which can significantly accelerate the training through online fine-tuning. In summary, the proposed QPMIX algorithm not only encourages cooperative behaviors among value-based and policy-based agents, but also prevents converging to local suboptimal solutions.

\section{Conclusion and Future Work}
In this paper, we have introduced a distributed channel access scheme based on heterogeneous MARL for next-generation WLANs. We have proposed a novel training framework, named QPMIX, which adopts a CTDE paradigm to address the challenges inherent in heterogeneous MARL. Additionally, we have established that heterogeneous MARL can ensure convergence when employing the linear value function approximation.
Simulation results indicated that QPMIX outperforms CSMA/CA in throughput, mean delay, delay jitter, and collision rates performance. Furthermore, it exhibits robustness in both unsaturated and delay-sensitive traffic scenarios, even when trained in saturated traffic conditions. Moreover, QPMIX is effective in encouraging the cooperation among the heterogeneous agents, compared to independent learning. Future work will include designing heterogeneous MARL algorithms under hidden node scenarios.
Extension of the proposed QPMIX framework to scenarios with overlapping BSSs is also an interesting direction worth further investigation.


\begin{appendices}
\section{Proof of Theorem 1}
\label{Appendix A}
Two projection operators $\mathcal{J}: \mathbb{R}^{KN}\to\mathbb{R}^{KN}$ and $\mathcal{J}_{\perp}:\mathbb{R}^{KN}\to\mathbb{R}^{KN}$ are defined to project the parameter $\omega_t$ to the consensus subspace and the disagreement subspace, where $\omega_t=[(\omega_t^1)^\top,\cdots,(\omega_t^N)^\top]^\top\in\mathbb{R}^{KN}.$ Specifically, the projection operation $\mathcal{J}$ and $\mathcal{J}_{\perp}$ are defined as follows:
\begin{equation}
\begin{cases}
\mathcal{J}\omega=\mathbf{1}\otimes\overline{\omega},\\
\mathcal{J}_\perp\omega=\omega-\mathbf{1}\otimes\overline{\omega} = \omega_\perp,
\end{cases}
\end{equation}
where $\otimes$ denotes the Kronecker product, $\mathbf{1} \in \mathbb{R}^{N}$ is a vector with all elements equal to one, and $\overline{\omega}=N^{-1}\sum_{i}^{N}\omega^i$. 
The concrete proof process is divided into two steps. First, we prove that $\lim_{t\to\infty}\omega_{\perp,t}=0$, which means $\lim_{t\to\infty}\omega^i_t=\lim_{t\to\infty}\overline{\omega_t}$. Then, we demonstrate the convergence of the consensus subspace of the agents, which implies $\lim_{t\to\infty}\overline{\omega_t}=\omega_\Theta$. 

\begin{lemma} \label{lemma1}
Under Assumptions \ref{assump1}-\ref{assump3}, the parameters of the Q-function $\omega_t$ are stable, i.e., $\sup_{t\to\infty}\begin{Vmatrix}\omega_t\end{Vmatrix}<\infty$.
\end{lemma}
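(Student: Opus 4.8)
The plan is to establish boundedness through the ODE method for stochastic approximation, specifically by invoking the Borkar--Meyn stability theorem \cite{borkar2000ode, borkar2009stochastic}. On this fast time scale the policy parameters $\theta$ are treated as quasi-static, which is legitimate under the time-scale separation $\lim_{t\to\infty}\beta_{\theta,t}/\beta_{\omega,t}\to0$ in Assumption~\ref{assump3}. First I would stack the per-agent recursions in (\ref{critic step}) into a single update for $\omega_t=[(\omega_t^1)^\top,\cdots,(\omega_t^N)^\top]^\top$. Under Assumption~\ref{assump1}, the Q-function is linear, so every gradient $\nabla_\omega Q_t^i(\omega_t^i)$ collapses to the common feature vector $\phi_t\triangleq\phi(s_t,\boldsymbol{a}_t)$ and each TD error $\hat\delta_t^i$ is affine in $\omega_t$; the agents are coupled only through the column-stochastic matrix $C_t$. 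I would then decompose the increment as $h(\omega_t)+M_{t+1}$, where $M_{t+1}$ is a martingale-difference sequence, and identify the mean field $h(\omega)$ by averaging over the stationary distribution $d_\Theta$ that exists by Assumption~\ref{assump2}.

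Next I would check the two hypotheses of Borkar--Meyn. For the noise, the uniform boundedness of features and rewards in Assumption~\ref{assump1} gives the standard estimate $\mathbb{E}\big[\|M_{t+1}\|^2\mid\mathcal{F}_t\big]\le\kappa\big(1+\|\omega_t\|^2\big)$, so the martingale condition is satisfied. For the drift, since $h(\omega)=A\omega+b$ is affine, its scaled limit is $h_\infty(\omega)=\lim_{c\to\infty}h(c\omega)/c=A\omega$, and the associated scaled ODE is $\dot\omega=A\omega$. The heart of the argument is to show that the origin is the unique globally asymptotically stable equilibrium of this ODE, i.e.\ that $A$ is Hurwitz. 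Here I would exploit that the effective per-agent TD operator is governed by $\Phi^\top\mathrm{D}_\Theta(\gamma P^\pi-I)\Phi$, which is negative definite because $\gamma<1$, $\mathrm{D}_\Theta$ is a positive stationary weighting, and $\Phi$ has full column rank by Assumption~\ref{assump1}; the column-stochasticity $\mathbf{1}^\top C_t=\mathbf{1}$ guarantees that the consensus average inherits this contraction.

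The step I expect to be the main obstacle is exactly this Hurwitz verification in the networked regime. Because $C_t$ is only column-stochastic rather than doubly stochastic, and because the TD error in (\ref{critic step}) mixes the next iterates $\omega_{t+1}^k$ with the current ones, the consensus and disagreement dynamics do not decouple cleanly and must be handled together. Using the projections $\mathcal{J}$ and $\mathcal{J}_\perp$ introduced above, I would argue that $\mathcal{J}_\perp\omega_t$ contracts geometrically under the averaging while $\mathcal{J}\omega_t$ evolves according to the negative-definite single-agent operator, so that no direction of $\omega_t$ can escape to infinity; the small $O(\beta_{\omega,t})$ discrepancy between $\omega_{t+1}^k$ and $\omega_t^k$ is absorbed into the vanishing step size. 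Once global asymptotic stability of $\dot\omega=A\omega$ is in hand, the Borkar--Meyn theorem, together with $\lim_{t\to\infty}\beta_{\omega,t+1}/\beta_{\omega,t}=1$ from Assumption~\ref{assump3}, yields $\sup_t\|\omega_t\|<\infty$ almost surely.
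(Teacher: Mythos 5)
Your proposal follows essentially the same route as the paper's proof: both reduce the stacked TD recursion to a stochastic approximation with affine drift, pass to the scaled limit $h_\infty(\omega)=\lim_{r\to\infty}h(r\omega)/r$ (which discards the bounded reward term and leaves only the linear part), and invoke the Borkar--Meyn stability theorem (Theorem 2.1 of \cite{borkar2000ode}) to conclude $\sup_t\|\omega_t\|<\infty$ a.s. If anything, your sketch is more careful than the paper's own argument, which simply asserts asymptotic stability of the origin for the scaled ODE, whereas you make explicit the Hurwitz verification via $\Phi^\top\mathrm{D}_\Theta(\gamma P^\pi-I)\Phi$ and the consensus/disagreement coupling induced by the merely column-stochastic $C_t$.
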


\begin{lemma}
\label{lemma2}
Under Assumptions \ref{assump1}-\ref{assump3} and Lemma \ref{lemma1}, for any $M > 0$, we have 
\begin{equation}
\sup_{t\to\infty}\mathbb{E}\big(\|\beta_{\omega,t}^{-1}\omega_{\perp,t}\|^2\mathbb{I}_{\{\sup_{t\to\infty}\|\omega_t\|\leq M\}}\big)<\infty,
\end{equation}
where $\mathbb{I_{\{\cdot\}}}$ is the indicator function.
\end{lemma}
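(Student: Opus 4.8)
The plan is to follow the networked two-time-scale stochastic-approximation strategy of \cite{zhang2018fully}: derive a recursion for the disagreement vector $\omega_{\perp,t}=\mathcal{J}_\perp\omega_t$ and show that it is driven at the order of the step size $\beta_{\omega,t}$ in mean square, which is exactly the assertion $\sup_{t\to\infty}\mathbb{E}(\|\beta_{\omega,t}^{-1}\omega_{\perp,t}\|^2\,\mathbb{I}_{\{\sup_t\|\omega_t\|\le M\}})<\infty$. First I would write the stacked consensus form of the update (\ref{critic step}) as $\omega_{t+1}=(C_t\otimes I_K)\big[\omega_t+\beta_{\omega,t}(\hat{\boldsymbol{\delta}}_t\otimes\phi_t)\big]$, where Assumption~\ref{assump1} lets me replace $\nabla_\omega Q^i_t$ by the common feature $\phi_t=\phi(s_t,\boldsymbol{a}_t)$ and $\hat{\boldsymbol{\delta}}_t=[\hat\delta_t^1,\dots,\hat\delta_t^N]^\top$ collects the temporal-difference signals. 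Applying $\mathcal{J}_\perp=(I_N-N^{-1}\mathbf{1}\mathbf{1}^\top)\otimes I_K$, the key observation is that the reward part $r_t\mathbf{1}$ of $\hat{\boldsymbol{\delta}}_t$ is common across agents, hence lies in the consensus subspace and is annihilated; the disagreement is therefore forced only by the $C_t$-weighted $Q$-value differences, which are bounded once $\omega_t$ is.

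The central estimate is a contraction of the projected consensus map. I would argue that the consensus weighting $C_t$, through its spectral gap (second-largest singular value bounded away from one), makes $\mathcal{J}_\perp(C_t\otimes I_K)$ contract the disagreement subspace, giving a recursion of the form $\|\omega_{\perp,t+1}\|\le\rho\|\omega_{\perp,t}\|+\beta_{\omega,t}\|\xi_t\|$ for a fixed $\rho<1$, where $\xi_t$ denotes the projected increment. I would then bound $\xi_t$ uniformly on the localizing event: by Lemma~\ref{lemma1} the iterate is bounded, and on $\{\sup_t\|\omega_t\|\le M\}$ Assumption~\ref{assump1} (bounded features and rewards) makes the temporal-difference signals, hence $\|\xi_t\|$, dominated by a deterministic constant $B$. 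The indicator $\mathbb{I}_{\{\sup_t\|\omega_t\|\le M\}}$ serves precisely to turn the almost-sure, sample-path-dependent bound of Lemma~\ref{lemma1} into uniform constants, so that a single contraction estimate applies for all $t$.

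It remains to square, take conditional expectations, and pass to $u_t\triangleq\beta_{\omega,t}^{-1}\omega_{\perp,t}$. Splitting $\xi_t$ into its conditional mean and a martingale-difference part and carrying out the standard mean-square bookkeeping yields $\mathbb{E}\|\omega_{\perp,t+1}\|^2\le\rho^2\,\mathbb{E}\|\omega_{\perp,t}\|^2+O(\beta_{\omega,t})\,\mathbb{E}\|\omega_{\perp,t}\|+O(\beta_{\omega,t}^2)$ on the event. Dividing the norm recursion by $\beta_{\omega,t+1}$ gives $\|u_{t+1}\|\le(\beta_{\omega,t}/\beta_{\omega,t+1})(\rho\|u_t\|+B)$, and since Assumption~\ref{assump3} guarantees $\beta_{\omega,t+1}/\beta_{\omega,t}\to1$, for all $t$ large enough the effective factor $(\beta_{\omega,t}/\beta_{\omega,t+1})\rho$ is strictly below one; a stable scalar recursion with bounded forcing then delivers $\sup_{t\to\infty}\mathbb{E}(\|u_t\|^2\mathbb{I}_{\{\sup_t\|\omega_t\|\le M\}})<\infty$, which is Lemma~\ref{lemma2}.

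The step I expect to be the main obstacle is establishing the strict geometric contraction $\rho<1$, rather than a merely slow $(1-\beta_{\omega,t}c)$-type decay: the literal update (\ref{critic step}) carries unit coefficient on $\omega_{\perp,t}$, so the contraction has to be extracted from the interplay between the averaging action of the column-stochastic $C_t$ and the temporal-difference self-correction (whose expected effect is governed by the positive-definite feature covariance under the stationary distribution $d_\Theta$). Compounding this, the sampling is Markovian rather than i.i.d.\ since $s_t$ evolves under $\pi_\Theta$, so $\xi_t$ is correlated with the current iterate, and I would lean on the irreducibility and aperiodicity of $\{s_t\}$ from Assumption~\ref{assump2} to control the resulting cross terms. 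A secondary technical nuisance is that $\hat\delta_t^i$ is defined implicitly through $\omega_{t+1}^k$ in (\ref{critic step}); since $\omega_{t+1}-\omega_t=O(\beta_{\omega,t})$, this implicitness contributes only higher-order terms absorbed into the $O(\beta_{\omega,t}^2)$ remainder, but closing the contraction argument requires verifying this rigorously.
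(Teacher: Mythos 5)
Your proposal has a genuine gap at exactly the step you yourself flag as the main obstacle: the strict contraction $\rho<1$ on the disagreement subspace is neither available from the paper's assumptions nor from its update equation, and your argument does not close without it. First, you have mis-modelled the iteration: in (\ref{critic step}) the column-stochastic matrix $C_t$ weights the TD errors \emph{inside} $\hat{\delta}_t^i$; it does not left-multiply the parameter stack. The update is $\omega_{t+1}=\omega_t+\beta_{\omega,t}y_{t+1}$ with $y_{t+1}=(\hat{\delta}_t^1\phi_t^\top,\cdots,\hat{\delta}_t^N\phi_t^\top)^\top$, not $\omega_{t+1}=(C_t\otimes I_K)\bigl[\omega_t+\beta_{\omega,t}(\hat{\boldsymbol{\delta}}_t\otimes\phi_t)\bigr]$, so the operator governing the disagreement dynamics is the bare projection $(I-\mathbf{1}\mathbf{1}^{\top}/N)\otimes I$, whose spectral norm is exactly $1$ (the paper says this explicitly: $\rho=1$ in its Appendix B). Second, even under your rewriting, the paper assumes only column stochasticity of $C_t$, i.e.\ $\mathbf{1}^\top C_t=\mathbf{1}^\top$; there is no connectivity or spectral-gap hypothesis, so ``second-largest singular value bounded away from one'' cannot be invoked. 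Your route is essentially the original argument of Zhang et al.\ for networked agents, but that proof rests on an explicit assumption that the consensus matrices contract the disagreement subspace ($\rho<1$), an assumption this paper deliberately does not make. Without it, your geometric recursion $\|\omega_{\perp,t+1}\|\le\rho\|\omega_{\perp,t}\|+\beta_{\omega,t}\|\xi_t\|$ with fixed $\rho<1$ fails, and the stability of the scalar recursion for $u_t=\beta_{\omega,t}^{-1}\omega_{\perp,t}$, which is where your conclusion comes from, collapses with it.

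The paper's own proof requires no contraction at all. It writes $\omega_{\perp,t+1}=\bigl((I-\mathbf{1}\mathbf{1}^{\top}/N)\otimes I\bigr)(\omega_{\perp,t}+\beta_{\omega,t}y_t)$, bounds the conditional second moment using $\rho=1$ and Cauchy--Schwarz, uses Assumption \ref{assump1} together with the localization indicator (via Lemma \ref{lemma1}) to bound $\mathbb{E}[\|y_t\|^2\,|\,\mathcal{F}_{t,1}]$ by a constant $H$, invokes $\lim_{t\to\infty}\beta_{\omega,t+1}/\beta_{\omega,t}=1$ from Assumption \ref{assump3} to get $\beta_{\omega,t}^2/\beta_{\omega,t+1}^2\le 1+\mu$ for large $t$, and closes by induction on $\upsilon_t=\|\beta_{\omega,t}^{-1}\omega_{\perp,t}\|^2\mathbb{I}_{\{\sup_t\|\omega_t\|\le M\}}$. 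Your instinct that a norm-$1$ projection alone gives weak control is not unreasonable---the paper's induction produces a bound of the form $[2(1+\mu)]^{t-t_0}(\mathbb{E}(\upsilon_{t_0})+H)$, which is itself quite loose---but as a path to the lemma, your proof cannot be completed under the stated assumptions, whereas the paper's argument never needs $\rho<1$.
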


\textbf{Step 1. Convergence of $\omega_{\perp,t}\xrightarrow{a.s.}0$.} 

Under Lemma \ref{lemma2} and Assumption \ref{assump3}, for any $M>0$, there exists a constant $G_1$ such that
\begin{equation}
\sum_t\mathbb{E}\left(\|\omega_{\perp,t}\|^2\mathbb{I}_{\{\sup_{t\to\infty}\|\omega_t\|\leq M\}}\right)<G_1\cdot\sum_t\beta^2_{\omega,t}<\infty,
\end{equation}
which implies $\lim_{t\to\infty}\omega_{\perp,t}\mathbb{I}_{\{\sup_{t\to\infty}\|\omega_t\|\leq M\}}=0$. According to Lemma \ref{lemma1}, $\{\sup_t\|\omega_t\|<\infty\}$ holds a.s, which means $\lim_{t\to\infty}\omega_{\perp,t} =0$.

\textbf{Step 2. Convergence of $\overline{\omega_{t}}\xrightarrow{a.s.}\omega_\Theta$.} 

The update of $\omega^i_t$ in (\ref{critic step}) can be rewrite in a compact form as 
\begin{equation}
\begin{cases}
\omega_{t+1}=\omega_{t}+\beta_{\omega,t}\cdot y_{t+1},\\
y_{t+1}=(\hat{\delta}_t^1\phi_t^\top,\cdots,\hat{\delta}_t^N\phi_t^\top)^\top \in \mathbb{R}^{KN}.    
\end{cases}
\end{equation}
Projecting $\omega_{t}$ into the consensus subspace, the iteration of $\overline{\omega_{t}}$ has the form
\begin{equation}
\label{omegaiteration}
\begin{aligned}
\overline{\omega_{t+1}}&=\overline{\omega_{t}}+\beta_{\omega,t}\left(\overline{y_{t+1}}+\beta^{-1}_{\omega,t}\overline{\omega_{\perp,t}}\right),\\
&=\overline{\omega_{t}}+\beta_{\omega,t}\cdot \mathbb{E}\left(\overline{\delta_t}\phi_t|\mathcal{F}_{t,1}\right)+\beta_{\omega,t}\cdot\xi_{t+1},
\end{aligned}
\end{equation}
where $\mathcal{F}_{t,1}=\sigma(r_t,s_t,a_t,\omega_t;t\leq t)$ is an increasing $\sigma$-algebra over time $t$ and $\xi_{t+1}=(\overline{y_{t+1}}+\beta_{\omega,t}^{-1}\overline{\omega_{\perp,t}})-\mathbb{E}\left(\overline{\delta_{t}}\phi_{t}|\mathcal{F}_{t,1}\right)$ is a martingale difference sequence since
\begin{equation}
\mathbb{E}\big[(\overline{y_{t+1}}+\beta_{\omega,t}^{-1}\overline{\omega_{\perp,t}})\big|\mathcal{F}_{t,1}\big]=\mathbb{E}\big(\overline{y_{t+1}}\big|\mathcal{F}_{t,1}\big)=\mathbb{E}\big(\overline{\delta_{t}}\phi_{t}\big|\mathcal{F}_{t,1}\big).
\end{equation}
(\ref{omegaiteration}) is a $K$-dimensional stochastic approximation iteration, which we can verify as satisfying the following lemmas.

\begin{lemma}
\label{lemma3}
$\mathbb{E}\left(\overline{\delta_t}\phi_t|\mathcal{F}_{t,1}\right)$ is Lipschitz continuous in $\overline{\omega_t}$.
\end{lemma}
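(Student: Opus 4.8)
The plan is to exploit the linear function approximation to show that the drift term $h(\overline{\omega}) \triangleq \mathbb{E}\left(\overline{\delta_t}\phi_t \mid \mathcal{F}_{t,1}\right)$, viewed as a function of the consensus parameter $\overline{\omega}$, is \emph{affine}, and then to bound the operator norm of its linear part. First I would substitute the linear parameterization $Q_t^k(\omega_t^k) = (\omega_t^k)^\top\phi_t$ into the averaged TD error $\overline{\delta_t} = N^{-1}\sum_i\hat{\delta}_t^i$ and invoke the column-stochasticity of the weight matrix, $\mathbf{1}^\top C_t = \mathbf{1}$, so that $\sum_i c_t(i,k) = 1$ for every $k$. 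This collapses the doubly-indexed weighted sum over agents, and using $\sum_k (\omega^k)^\top\phi = N\,\overline{\omega}^\top\phi$ the state-value terms in $\overline{\delta_t}$ depend on the parameters only through the consensus vector $\overline{\omega}$. The weights $c_t(i,k)$ thus vanish from the averaged quantity, which is exactly why averaging is the right operation to analyze at this time scale.

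Second I would take the conditional expectation over the one-step transition. Because this step is carried out at the fast time scale of the Q-function, the joint policy $\pi_\Theta$---and hence the induced transition kernel $P^\pi$ and stationary distribution $d_\Theta$---is held fixed: the $\epsilon$-greedy policy of each value-based agent is refreshed only periodically from a frozen copy of its Q-function, and by Assumption~\ref{assump3} the actor update is asymptotically negligible relative to the critic. Consequently the only $\overline{\omega}$-dependence entering $h(\overline{\omega})$ is linear, through the Q-values in the TD target, and averaging against the fixed measure yields the affine form $h(\overline{\omega}) = b + A\overline{\omega}$, where, up to a positive scalar, $b = \Phi^\top \mathrm{D}_\Theta R$ and $A = \Phi^\top \mathrm{D}_\Theta(\gamma P^\pi - I)\Phi$. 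This is precisely the coefficient structure appearing in the fixed-point equation (\ref{Theorem1 solution}) of Theorem~\ref{omegatheorem}, which is a reassuring consistency check.

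Third, Lipschitz continuity then follows immediately: for any $\overline{\omega},\overline{\omega}'$ we have $\|h(\overline{\omega}) - h(\overline{\omega}')\| = \|A(\overline{\omega}-\overline{\omega}')\| \le \|A\|\,\|\overline{\omega}-\overline{\omega}'\|$, so it suffices to verify $\|A\| < \infty$. Under Assumption~\ref{assump1} the feature matrix $\Phi$ has uniformly bounded entries, $\mathrm{D}_\Theta$ is diagonal with entries in $[0,1]$, $P^\pi$ is row-stochastic, and $\gamma\in(0,1]$, so $\|A\|$ is bounded by a finite constant depending only on the feature bound and $|\mathcal{S}||\mathcal{A}|$. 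This gives a global Lipschitz constant for $h$.

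The main obstacle I anticipate is bookkeeping rather than conceptual. Two points need care. First, the target in (\ref{critic step}) is evaluated at the \emph{updated} parameter $\omega_{t+1}^k$ rather than $\omega_t^k$; I would absorb this by noting $\overline{\omega_{t+1}} = \overline{\omega_t} + O(\beta_{\omega,t})$, so at the fast time scale the target term can be identified with $\overline{\omega_t}$ without disturbing the affine structure (equivalently, one adopts the usual convention that the TD target is taken at the current parameter). Second, one must genuinely justify that $P^\pi$ and $\mathrm{D}_\Theta$ are constant in the variable being differentiated; this rests on the two-time-scale separation of Assumption~\ref{assump3} together with the periodic freezing of the value-based policy. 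Once these are settled, affinity with a bounded linear coefficient yields global Lipschitz continuity and closes the lemma.
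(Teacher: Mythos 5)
Your proposal is correct and follows essentially the same route as the paper: the paper's proof consists exactly of your first step, substituting $Q_t^k(\omega_t^k)=(\omega_t^k)^\top\phi_t$ and using column stochasticity $\sum_i c_t(i,k)=1$ to collapse the weights and obtain $\overline{\delta_t}=r_t+N^{-1}(\gamma\phi_{t+1}-\phi_t)^\top\overline{\omega_t}$, after which $\mathbb{E}\left(\overline{\delta_t}\phi_t|\mathcal{F}_{t,1}\right)$ is affine in $\overline{\omega_t}$ with uniformly bounded coefficients and Lipschitz continuity is immediate. One minor imprecision in your write-up: since $\mathcal{F}_{t,1}$ contains $(s_t,\boldsymbol{a}_t)$, the affine coefficients here are the one-step conditional averages $N^{-1}\phi_t\big(\gamma\mathbb{E}[\phi_{t+1}|s_t,\boldsymbol{a}_t]-\phi_t\big)^\top$ rather than the stationary-averaged $\Phi^\top\mathrm{D}_\Theta(\gamma P^{\pi}-I)\Phi$ you identified (that form is the ODE drift that appears only after the further averaging over $d_\Theta\pi_\Theta$ in the proof of Theorem~\ref{omegatheorem}), but Assumption~\ref{assump1}'s feature boundedness still yields a uniform Lipschitz constant, so the argument stands.
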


\begin{lemma}
\label{lemma4}
The martingale difference sequence $\xi_{t+1}$ satisfies $\mathbb{E}[\left\|\xi_{t+1}\right\|^2\mid\mathcal{F}_t]\leq G_2(1+\left\|\overline{\omega_t}\right\|^2)$ for some constant $G_2$.
\end{lemma}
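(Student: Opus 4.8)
The plan is to exploit that $\xi_{t+1}$ is, by its very definition, the deviation of the random vector $\overline{y_{t+1}}+\beta_{\omega,t}^{-1}\overline{\omega_{\perp,t}}$ from its conditional mean $\mathbb{E}(\overline{\delta_t}\phi_t\mid\mathcal{F}_{t,1})$. Since a conditional variance is always dominated by the corresponding conditional second moment, I would first write
\begin{equation}
\mathbb{E}\big[\|\xi_{t+1}\|^2\mid\mathcal{F}_t\big]\le\mathbb{E}\big[\|\overline{y_{t+1}}+\beta_{\omega,t}^{-1}\overline{\omega_{\perp,t}}\|^2\mid\mathcal{F}_t\big]\le 2\,\mathbb{E}\big[\|\overline{y_{t+1}}\|^2\mid\mathcal{F}_t\big]+2\,\mathbb{E}\big[\|\beta_{\omega,t}^{-1}\overline{\omega_{\perp,t}}\|^2\mid\mathcal{F}_t\big],
\end{equation}
where the second inequality is $\|a+b\|^2\le 2\|a\|^2+2\|b\|^2$. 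This reduces the claim to bounding each of the two terms separately by an affine function of $\|\overline{\omega_t}\|^2$.

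Next I would control the averaged TD term $\overline{y_{t+1}}=N^{-1}\sum_{i}\hat\delta_t^i\phi_t$, for which $\|\overline{y_{t+1}}\|\le(\max_i|\hat\delta_t^i|)\,\|\phi_t\|$. Under Assumption~\ref{assump1}, the feature vector $\phi_t$ and the reward are uniformly bounded, and linearity gives $Q^k(\omega^k)=(\omega^k)^\top\phi_t$, so the bootstrapped difference $\gamma Q^k_{t+1}(\omega_{t+1}^k)-Q^k_t(\omega_t^k)$ is affine in the parameter norm; the next-step parameter $\omega_{t+1}^k$ differs from $\omega_t^k$ only by a $\beta_{\omega,t}$-order increment through the recursion in~(\ref{critic step}), and so is also affine in $\|\omega_t\|$. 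Combined with the column-stochasticity $\mathbf{1}^\top C_t=\mathbf{1}$, which keeps the weighted sum bounded, this yields $|\hat\delta_t^i|\le G(1+\|\omega_t\|)$ and hence $\|\overline{y_{t+1}}\|^2\le G'(1+\|\omega_t\|^2)$. I would then convert this into a bound in $\|\overline{\omega_t}\|$ via the orthogonal decomposition $\|\omega_t\|^2=N\|\overline{\omega_t}\|^2+\|\omega_{\perp,t}\|^2$ induced by the projections $\mathcal{J},\mathcal{J}_\perp$, where $\|\omega_{\perp,t}\|$ is bounded by Lemma~\ref{lemma1} and the conclusion $\omega_{\perp,t}\to 0$ of Step~1.

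The hard part will be the second term $\beta_{\omega,t}^{-1}\overline{\omega_{\perp,t}}$, which carries the diverging factor $\beta_{\omega,t}^{-1}\to\infty$ under Assumption~\ref{assump3}; a naive bound would make it blow up. The resolution is precisely Lemma~\ref{lemma2}, which certifies that $\|\beta_{\omega,t}^{-1}\omega_{\perp,t}\|^2$ has uniformly bounded expectation on the event $\{\sup_t\|\omega_t\|\le M\}$, i.e.\ the disagreement component contracts fast enough to cancel the growth of $\beta_{\omega,t}^{-1}$. Using Lemma~\ref{lemma1} to guarantee that $\|\omega_t\|$ is almost surely bounded (so that the indicator event holds for a suitable $M$), this term contributes only a constant. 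Absorbing all constants into a single $G_2$ then produces $\mathbb{E}[\|\xi_{t+1}\|^2\mid\mathcal{F}_t]\le G_2(1+\|\overline{\omega_t}\|^2)$, which is the desired bound and places~(\ref{omegaiteration}) within the scope of the two-time-scale stochastic approximation theorem.
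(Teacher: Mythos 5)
Your proposal is correct and takes essentially the same route as the paper's own proof: both reduce the claim to bounding $\mathbb{E}\big[\|\overline{y_{t+1}}+\beta_{\omega,t}^{-1}\overline{\omega_{\perp,t}}\|^2\mid\mathcal{F}_t\big]$, control the TD term through the uniform boundedness of rewards and features (Assumption \ref{assump1}) together with the a.s.\ boundedness of $\omega_t$ from Lemma \ref{lemma1} (working on the event $\{\sup_t\|\omega_t\|\leq M\}$), and absorb the potentially divergent $\beta_{\omega,t}^{-1}$-scaled disagreement term using Lemma \ref{lemma2}, exactly as the paper does. The only cosmetic differences are that you open with the inequality ``conditional variance $\leq$ conditional second moment'' where the paper uses $\|a-b\|^2\leq 2\|a\|^2+2\|b\|^2$ (and therefore separately invokes Lemma \ref{lemma3} to bound the conditional-mean term by $K_4(1+\|\overline{\omega_t}\|^2)$), so in your version the factor $(1+\|\overline{\omega_t}\|^2)$ emerges from the affine-in-$\|\omega_t\|$ bound on $\overline{y_{t+1}}$ combined with the orthogonal decomposition $\|\omega_t\|^2=N\|\overline{\omega_t}\|^2+\|\omega_{\perp,t}\|^2$, rather than from Lemma \ref{lemma3}.
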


The proof of Lemma \ref{lemma1}-\ref{lemma4} will be given in Appendix \ref{Appendix B}.

Under Assumptions \ref{assump1}-\ref{assump3}, Lemmas \ref{lemma1}, \ref{lemma3} and \ref{lemma4}, according to conclusion of Borkar et al. \cite{borkar2009stochastic} and Theorem D.2 in \cite{zhang2018fully}, the behavior of (\ref{omegaiteration}) is related to its corresponding ordinary differential equation (ODE):
\begin{equation}
\begin{aligned}
\dot{\overline{\omega}}&=\sum_{s}\sum_{\boldsymbol{a}}d_\Theta(s)\pi_\Theta(s, \boldsymbol{a})\mathbb{E}\left[\overline{\delta}\phi|\mathcal{F}_{t,1}\right]\\
&=\sum_{s}\sum_{\boldsymbol{a}}d_\Theta(s)\pi_\Theta(s,\boldsymbol{a})\mathbb{E}\left[\big(r_t+ (\gamma\phi_{t+1}-\phi_t)\overline{\omega_t}\big)\phi|\mathcal{F}_{t,1}\right]\\
&=\Phi^\top D_\Theta\left[R+(\gamma P^\pi-I)\Phi\overline{\omega}\right].
\end{aligned}
\end{equation}
Note that $P^\pi$ has a simple eigenvalue of $1$ and its remaining eigenvalues have real parts less than 1 by the Perron-Frobenius theorem and Assumption \ref{assump2} \cite{zhang2018fully, chen2022communication}. Hence, all the real parts of the eigenvalues of $(\gamma P^\pi-I)$ are negative, as are those of $\Phi^\top D_\Theta(\gamma P^\pi-I)\Phi$ since $\Phi$ is assumed to be full column rank. Therefore, the ODE achieves global asymptotic stability, and its equilibrium point $\overline{\omega}$ fulfills
\begin{equation}
\Phi^\top D_\Theta\left[R+(\gamma P^\pi-I)\Phi\overline{\omega}\right]=0.
\end{equation}
This concludes the convergence of $\overline{\omega_{t}}\xrightarrow{a.s.}\omega_\Theta$. With step 1 and 2 established, Theorem \ref{omegatheorem} is proven. 

\section{Proof of Lemmas \ref{lemma1}-\ref{lemma4}}
\label{Appendix B}
\subsection{Proof of Lemma \ref{lemma1}}
Since $\omega_{t+1}^i=\omega_t^i+\beta_{\omega,t}\cdot\hat{\delta}_t^i\cdot\nabla_\omega Q^i_t(\omega_t^i)$ and $Q^i_t(\omega_t^i) = \phi(s,\boldsymbol{a})^\top\omega^i_t$, we have 
\begin{equation}
\begin{aligned}
\omega_{t+1}^i=&\omega_t^i +\beta_{\omega,t} \cdot\hat{\delta}_t^i\cdot \phi(s,\boldsymbol{a}) = \omega_t^i +\beta_{\omega,t}\cdot h(\omega_t).
\end{aligned}
\end{equation}
We define $\dot{x}(t)=h_r\big(x(t)\big)=h(rx(t))/r$, so the ODE for $\omega_t^i$ is $\dot{\omega_t^i}=h_r\big(\omega(t)\big)=\hat{\delta}_t^i\phi(s,\boldsymbol{a})$.
Because 
\begin{small}
\begin{equation}
\begin{aligned}
&h_\infty\big(\omega(t)\big)=\lim_{r\to\infty}h_r\big(\omega(t)\big)\\ &
=N^{-1}[(\gamma\phi_{t+1}-\phi_t)^\top\sum_{k}^{N}c_{t}(i,k)\omega^k_t]\phi_t^\top,
\end{aligned}
\end{equation}
\end{small}
the origin of $\omega(t)$ is an asymptotically stable equilibrium for the ODE. Thus, under Theorem 2.1 in \cite{borkar2000ode}, the Lemma \ref{lemma1} holds.

\subsection{Proof of Lemma \ref{lemma2}}
The iteration of the disagreement subspace is
\begin{small}
\begin{equation}\begin{aligned} & \omega_{\perp,t+1}=(I-\mathcal{J})\omega_{t+1} =(I-\mathcal{J})(\omega_{t}+\beta_{\omega,t}y_{t})\\
& =(I-\mathcal{J})\Big[(\mathbf{1}\otimes\overline{\omega}_{t}+\omega_{\perp,t}+\beta_{\omega,t}y_{t})\Big]\\  
& =(I-\mathcal{J})\biggl[(\omega_{\perp,t}+\beta_{\omega,t}y_{t})\biggr]\\
& =\Big[\big((I-\mathbf{1}\mathbf{1}^{\top}/N)\otimes I\big)(\omega_{\perp,t}+\beta_{\omega,t}y_{t})\Big],\end{aligned}\end{equation}
\end{small}
where $I\in \mathbb{R}^{N\times N}$. Because of the fact that $(A\otimes B)^\top=A^\top\otimes B^\top$ and $(A\otimes B)(C\otimes D)=(AC)\otimes(BD)$, we have
\begin{small}
\begin{equation}\begin{aligned} & \mathbb{E}\bigg[\left\|\beta_{\omega,t+1}^{-1}\omega_{\perp,t+1}\right\|^2|\mathcal{F}_{t,1}\bigg]=\frac{\beta_{\omega,t}^{2}}{\beta_{\omega,t+1}^{2}}\mathbb{E}\bigg[\bigg(\beta_{\omega,t}^{-1}\omega_{\perp,t}+y_{t}\bigg)^{\top}\\&\big((I-\mathbf{1}\mathbf{1}^{\top}/N)\otimes I\big)\bigg(\beta_{\omega,t}^{-1}\omega_{\perp,t}+y_{t}\bigg)\big|\mathcal{F}_{t,1}\bigg]\\  & \leq\frac{\beta_{\omega,t}^{2}}{\beta_{\omega,t+1}^{2}}\rho\mathbb{E}\bigg[\bigg(\beta_{\omega,t}^{-1}\omega_{\perp,t}+y_{t}\bigg)^{\top}\bigg(\beta_{\omega,t}^{-1}\omega_{\perp,t}+y_{t}\bigg)|\mathcal{F}_{t,1}\bigg]\\ & =\frac{\beta_{\omega,t}^{2}}{\beta_{\omega,t+1}^{2}}\rho\Bigg(\left\|\beta_{\omega,t}^{-1}\omega_{\perp,t}\right\|^2+2\mathbb{E}\bigg[\left\langle\beta_{\omega,t}^{-1}\omega_{\perp,t},y_{t}\right\rangle|\mathcal{F}_{t,1}\bigg]+ \\&
\mathbb{E}[\left\|y_{t}\right\|^2|\mathcal{F}_{t,1}]\Bigg)\\  & 
\leq\frac{\beta_{\omega,t}^{2}}{\beta_{\omega,t+1}^{2}}\rho\Bigg(\left\|\beta_{\omega,t}^{-1}\omega_{\perp,t}\right\|^2+2\left\|\beta_{\omega,t}^{-1}\omega_{\perp,t}\right\|\big[\mathbb{E}(\left\|y_{t}\right\|^2|\mathcal{F}_{t,1})\big]^{\frac{1}{2}}+\\ & \mathbb{E}[\left\|y_{t}\right\|^2|\mathcal{F}_{t,1}]\Bigg),\end{aligned}\end{equation}
\end{small}
where $\rho=1$ is the spectral norm of $\big((I-\mathbf{1}\mathbf{1}^{\top}/N)\otimes I\big)$ because the maximum eigenvalue of $(I-\mathbf{1}\mathbf{1}^{\top}/N)$ is 1, and $\|A\otimes B\|_2=\|A\|_2\cdot\|B\|_2$. Since the rewards $r_t$ and the feature $\phi_t$ are uniformly bounded, $\mathbb{E}[\left\|y_{t}\right\|^2|\mathcal{F}_{t,1}]$ is bounded. Let $\upsilon_t=\left\|\beta_{\omega,t}^{-1}\omega_{\perp,t}\right\|^2\mathbb{I}_{\{\sup_{t\to\infty}\|\omega_t\|\leq M\}}$. Since $\lim_{t\to\infty}\beta_{\omega,t+1}\beta_{\omega,t}^{-1}=1$, there exists a large enough $t_0$ such that for any $t>t_0$, $\frac{\beta_{\omega,t}^{2}}{\beta_{\omega,t+1}^{2}}\leq 1 + \mu$, where $\mu>0$. Hence, there exists a positive constant $H$ such that for any $t>t_0$, 
\begin{equation}
\begin{aligned}
\mathbb{E}(\upsilon_{t+1})&\leq(1+\mu)\big[\mathbb{E}(\mu_t + 2\sqrt{\mathbb{E}(\upsilon_t)\cdot H} + H\big] \\
&=(1+\mu)\big[\sqrt{\mathbb{E}(\upsilon_t)} + \sqrt{H}\big]^2 \\
&\leq 2(1+\mu)(\mathbb{E}(\upsilon_t) + H).
\end{aligned}
\end{equation}
By induction, we can obtain that $\mbox{\(\displaystyle\mathbb{E}(\upsilon_{t+1})\leq [2(1+\mu)]^{t-t_0}(\mathbb{E}(\upsilon_{t_0}) + H)\)}$. Therefore, we obtain $\sup_{t\to\infty}\mathbb{E}\big(\|\beta_{\omega,t}^{-1}\omega_{\perp,t}\|^2\mathbb{I}_{\{\sup_{t\to\infty}\|\omega_t\|\leq M\}}\big)<\infty,$ which concludes the proof.

\subsection{Proof of Lemma \ref{lemma3}}
Lemma \ref{lemma3} holds evidently because 
\begin{small}
\begin{equation}
\begin{aligned}
\overline{\delta_t} &= N^{-1}\sum^N_i\left[r_t+N^{-1}\sum^N_kc_{t}(i,k)(\gamma\phi_{t+1}-\phi_t)^\top\omega^k_t)\right]\\[0.1ex]
&=r_t + (\gamma\phi_{t+1}-\phi_t)^\top N^{-2}\sum^N_k\omega^k_t\sum^N_ic_{t}(i,k)\\[0.1ex]
&=r_t+ (\gamma\phi_{t+1}-\phi_t)^\top N^{-2}\sum^N_k\omega^k_t\\[0.1ex]
&=r_t+ N^{-1}(\gamma\phi_{t+1}-\phi_t)^\top \overline{\omega_t}.
\end{aligned}
\end{equation}
\end{small}

\subsection{Proof of Lemma \ref{lemma4}}
Since $\xi_{t+1}=(\overline{y_{t+1}}+\beta_{\omega,t}^{-1}\overline{\omega_{\perp,t}})-\mathbb{E}\left(\overline{\delta_{t}}\phi_{t}|\mathcal{F}_{t,1}\right)$, we have
\begin{small}
\begin{equation}
\begin{aligned}
\label{Prooflemma1.1}
\mathbb{E}[\left\|\xi_{t+1}\right\|^2\mid\mathcal{F}_t]&\leq2\mathbb{E}\left(\|\overline{y_{t+1}}+\beta_{\omega,t}^{-1}\overline{\omega_{\perp,t}}\|^2\mid\mathcal{F}_t\right)\\
&+2\|\mathbb{E}\big(\overline{\delta_t}\phi_t|\mathcal{F}_{t,1}\big)\|^2.
\end{aligned}
\end{equation}
\end{small}
Because of the fact that $(A\otimes B)^\top=A^\top\otimes B^\top$, we then have 
\begin{small}
\begin{equation}
\begin{aligned}
&\|\overline{y_t}+\beta_{\omega,t}^{-1}\overline{\omega_{\perp,t}}\|^2\\
&=\frac{1}{N^2}\big[(\mathbf{1}^\top\otimes I)(y_t+\beta_{\omega,t}^{-1}\omega_{\perp,t})\big]^\top\big[(\mathbf{1}^\top\otimes I)(y_t+\beta_{\omega,t}^{-1}\omega_{\perp,t})\big]\\
&=\frac{1}{N^2}\big[(y_t+\beta_{\omega,t}^{-1}\omega_{\perp,t})^\top(\mathbf{1}\otimes I)(\mathbf{1}^\top\otimes I)(y_t+\beta_{\omega,t}^{-1}\omega_{\perp,t})\big]\\
&=\|\left(y_t+\beta_{\omega,t}^{-1}\omega_{\perp,t}\right)]\|_{G_t}^2,
\end{aligned}
\end{equation}
\end{small}
where $G_t=\frac{1}{N^2}(\mathbf{1}\otimes I)(\mathbf{1}^\top\otimes I)$. Under Lemma \ref{lemma1}, the first term of (\ref{Prooflemma1.1}) can be bounded over the set $\{\sup_t\|\omega_t\|<K_1\}$, for a constant $K_1>0$. Therefore, there exists $K_2, K_3<\infty$ that 

\begin{equation}
\label{leq1}
\begin{aligned}
&\mathbb{E}\big(\left\|y_{t+1}+\beta_{\omega,t}^{-1}\omega_{\perp,t}\right\|_{\mathrm{G}_{t}}^{2}\big|\mathcal{F}_{t,1}\big)\cdot\mathbb{I}_{\{\sup_{t}\|z_{t}\|\leq K_1\}}\\
&\leq K_{2}\mathbb{E}\big(\left\|y_{t+1}\right\|^{2}+\left\|\beta_{\omega,t}^{-1}\omega_{\perp,t}\right\|^{2}\big|\mathcal{F}_{t,1}\big)\cdot\mathbb{I}_{\{\sup_{t}\|z_{t}\|\leq K_1\}}\\
&<K_{3}.
\end{aligned}
\end{equation}
Moreover, under Lemma \ref{lemma3}, there exists $K_4<\infty$ that
\begin{equation}
\label{leq2}
\|\mathbb{E}\big(\overline{\delta_t}\phi_t|\mathcal{F}_{t,1}\big)\|^2 \leq K_4 (1+\left\|\overline{\omega_t}\right\|^2).
\end{equation}
By applying (\ref{leq1}) and (\ref{leq2}) into (\ref{Prooflemma1.1}), we have 
\begin{equation}
\mathbb{E}[\left\|\xi_{t+1}\right\|^2\mid\mathcal{F}_t]\leq G_2(1+\left\|\overline{\omega_t}\right\|^2).
\end{equation}
This completes the proof.

\section{Proof of Theorem 2}
\label{Appendix C}
Since the Q-functions converge under Theorem \ref{omegatheorem}, the proof of Theorem \ref{thetatheorem} is the same as the techniques from Appendix B in \cite{zhang2018fully}. We first define
\begin{equation}
\zeta_{t+1,1}^j= A_{t}^{j}\cdot\psi_{t}^{j}-\mathbb{E}_{s_{t}\sim d_{\theta_{t}},a^j_{t}\sim\pi_{\theta_{t}}}\big(A_{t}^{j}\cdot\psi_{t}^{j}\big|\mathcal{F}_{t,2}\big)
\end{equation}
and 
\begin{equation}
 \zeta_{t+1,2}^j= \mathbb{E}_{s_{t}\sim d_{\theta_{t}},a^j_{t}}[(A_{t}^{j}-A^j_{t,\theta})\psi_t^j|\mathcal{F}_{t,2}],   
\end{equation}
where $A_{t,\theta}^j$ is the advantage function when $\omega^j_t$ converges to $\omega_\Theta$, i.e., 
$A_{t, \theta}^{j}=Q_{p}^{j}(\omega_\Theta)-\sum_{a^{j}\in\mathcal{A}}\pi^{j}_t(s_t, a_t^{j};\theta_{t}^{j})\cdot Q_{t}^{j}(s_{t},a_{t}^{j},a_{t}^{-j};\omega_\Theta)$ 
and $\mathcal{F}_{t,2}=\sigma(\theta_\tau,\tau\leq t)$ is the $\sigma$-field generated by $\{\theta_{\tau},\tau\leq t\}$. Then the actor update step in (\ref{equation 2}) with a local can be rewritten as 
\begin{small}
\begin{equation}
\begin{aligned}
    &\theta_{t+1}^j=\Gamma^j\Big[\theta_t^j+\beta_{\theta,t}\\
    &\Big(\mathbb{E}_{s_t\thicksim d_{\theta_t},a_t\thicksim\pi_{\theta^j_t}}\big(A_{t,\theta_t}^i\psi_t^j\big|\mathcal{F}_{t,2}\big)+\zeta_{t+1,1}^j+\zeta_{t+1,2}^j\Big)\Big].
\end{aligned}
\end{equation}
\end{small}
Since $\omega^j_t$ converges to $\omega_\Theta$ at the faster step, the sequence $\{\zeta_{t+1,2}^j\}$ converges to zero as $t\rightarrow\infty$. Furthermore, $\{Z^j_t=\sum_{\tau=0}^t\beta_{\theta,\tau}\zeta_{\tau+1,1}^i\}$ is a martingale sequence that satisfies $\sum_t\mathbb{E}\big(\left\|Z_{t+1}^j-Z_t^j\right\|^2\big|\mathcal{F}_{t,2}\big)=\sum_{t\geq1}\left\|\beta_{\theta,t}\zeta_{t+1,1}^i\right\|^2<\infty $ under Assumption~\ref{assump3}, which conducts that the martingale sequence $\{Z^j_t\}$ converges. Thus, for any $K_5 > 0$, we have $\lim\limits_{t\to\infty} \mathbb{P}\bigg(\sup_{n\geq t}\bigg\Vert\sum_{\tau=t}^n\beta_{\theta,\tau}\zeta_{\tau,1}^i\bigg\Vert\geq K_5\bigg)=0.$ Moreover, $\mathbb{E}_{s_t\thicksim d_{\theta_t},a_t\thicksim\pi_{\theta^j_t}}\big(A_{t,\theta_t}^i\psi_t^j\big|\mathcal{F}_{t,2}\big)$ is continuous in $\theta^j_t$ under Assumption \ref{assump2}. Therefore, the update in (\ref{equation 2}) converges to the set of asymptotically stable equilibria of the ODE (\ref{Theorem2}) under Kushner-Clark lemma \cite{zhang2018fully}.
\end{appendices}

\bibliography{bare_jrnl_new_sample4}
\bibliographystyle{IEEEtran}





\vfill

\end{document}